\PassOptionsToPackage{hypertexnames=false}{hyperref}
\documentclass[11pt]{article}
\usepackage{fullpage}
\usepackage[hyphens]{url}
\usepackage{amsmath,amssymb,amsthm,mathtools}
\usepackage{enumitem}
\usepackage{microtype}
\usepackage{xcolor}
\usepackage[algo2e,ruled]{algorithm2e}
\SetAlgoNoEnd
\usepackage{array}
\usepackage{booktabs}
\usepackage{multirow}
\usepackage{thmtools}
\usepackage{thm-restate}
\usepackage[colorlinks=true,linkcolor=blue!45!black,citecolor=blue!45!black,urlcolor=blue!55!black]{hyperref}
\usepackage[nameinlink,capitalize,noabbrev]{cleveref}
\usepackage{autonum}
\makeatletter
\let\forlistloop\autonum@etoolbox@forlistloop
\def\blx@noerroretextools{}
\makeatother
\usepackage[
  date=year,
  eprint=false,
  doi=false,
  isbn=false,
  backend=biber,
  giveninits=true,
  uniquename=init,
  maxcitenames=2,
  maxbibnames=99,
  natbib=true,
  url=false,
  sorting=ynt,
  style=apa,
  citestyle=authoryear-comp,
  backref=true,
  uniquelist=false
]{biblatex}

\DeclareSourcemap{
  \maps[datatype=bibtex]{
    \map{
      \step[fieldset=editor, null]
      \step[fieldset=language, null]
      \step[fieldset=address, null]
      \step[fieldset=location, null]
      \step[fieldset=month, null]
      \step[fieldset=annote, null]
    }
  }
}
\DefineBibliographyStrings{english}{%
  backrefpage  = {cited on page},
  backrefpages = {cited on pages},
}
\DefineBibliographyStrings{american}{%
  backrefpage  = {cited on page},
  backrefpages = {cited on pages},
}
\DefineBibliographyStrings{american-apa}{%
  backrefpage  = {cited on page},
  backrefpages = {cited on pages},
}
\renewbibmacro*{pageref}{%
  \iflistundef{pageref}
    {}
    {\printtext[parens]{%
       \midsentence%
       \ifnumgreater{\value{pageref}}{1}
         {\bibstring{backrefpages}\ppspace}
         {\bibstring{backrefpage}\ppspace}%
       \printlist[pageref][-\value{listtotal}]{pageref}}}}
\hypersetup{
  pdftitle={Tight Regret Bound for Online Inverse Linear Optimization via Multiscale Matrix Weights}
}
\newtheorem{theorem}{Theorem}[section]
\newtheorem{lemma}[theorem]{Lemma}

\crefname{lemma}{Lemma}{Lemmas}
\Crefname{lemma}{Lemma}{Lemmas}
\crefname{proposition}{Proposition}{Propositions}
\Crefname{proposition}{Proposition}{Propositions}
\crefname{theorem}{Theorem}{Theorems}
\Crefname{theorem}{Theorem}{Theorems}
\crefname{algorithm}{Algorithm}{Algorithms}
\crefname{algocf}{Algorithm}{Algorithms}
\crefname{section}{Section}{Sections}
\Crefname{section}{Section}{Sections}
\crefname{appendix}{Appendix}{Appendices}
\Crefname{appendix}{Appendix}{Appendices}
\crefname{table}{Table}{Tables}
\Crefname{table}{Table}{Tables}
\makeatletter
\g@addto@macro\appendix{\crefalias{section}{appendix}\crefalias{subsection}{appendix}}
\makeatother
\newcommand{\savednormallabel}{}
\AtBeginDocument{\global\let\savednormallabel\label}
\newcommand{\restorenormallabel}{\global\let\label\savednormallabel}
\makeatletter
\newcommand{\wraprestatablewithlabelrestore}[1]{%
  \csletcs{restatable@orig@#1}{#1}%
  \csdef{#1}{%
    \@ifstar{%
      \csuse{restatable@orig@#1}*%
      \restorenormallabel
    }{%
      \csuse{restatable@orig@#1}%
    }%
  }%
}
\makeatother
\DeclarePairedDelimiter{\prn}{(}{)}
\DeclarePairedDelimiter{\set}{\{}{\}}
\DeclarePairedDelimiterX{\Set}[2]{\{}{\}}{\,{#1}\,:\,{#2}\,}
\DeclarePairedDelimiter{\abs}{|}{|}
\DeclarePairedDelimiter{\norm}{\|}{\|}
\DeclarePairedDelimiter{\inpr}{\langle}{\rangle}
\DeclarePairedDelimiter{\brc}{[}{]}
\DeclarePairedDelimiter{\ceil}{\lceil}{\rceil}
\DeclarePairedDelimiter{\floor}{\lfloor}{\rfloor}
\newcommand{\R}{\mathbb{R}}
\newcommand{\Q}{\mathbb{Q}}
\newcommand{\E}{\mathbb{E}}
\newcommand{\B}{\mathbb{B}}
\newcommand{\conv}{\operatorname{conv}}
\newcommand{\BR}{\operatorname{BR}}
\newcommand{\Reg}{\operatorname{Reg}}
\newcommand{\poly}{\operatorname{poly}}
\newcommand{\eps}{\varepsilon}
\DeclareMathOperator*{\argmax}{arg\,max}
\newcommand{\Tr}{\operatorname{Tr}}
\newcommand{\op}{\mathrm{op}}
\numberwithin{equation}{section}
\allowdisplaybreaks
\title{Tight Regret Bound for Online Inverse Linear Optimization via Multiscale Matrix Weights\thanks{In view of rapid recent progress in this area
\citep{Dewasurendra2026,Kitaoka2026SGS,CaiEtAl2026}, the author is sharing this
version before its exposition has reached the intended level of polish.
The mathematical arguments are complete. The existence guarantees in \cref{thm:regret,thm:computation}
have been verified in Lean with AI assistance, assuming standard external theorems. 
The author plans to expand the content and improve the presentation.}}
\author{
  Shinsaku Sakaue%
\thanks{CyberAgent, Tokyo, Japan; National Institute of Informatics, Tokyo, Japan; Center for Advanced Intelligence Project, RIKEN, Tokyo, Japan. Email: \texttt{shinsaku.sakaue@gmail.com}.}%
}
\date{}

\begin{document}
\maketitle
\begin{abstract}
We study online inverse linear optimization with a fixed unknown linear utility: in
each round, an environment presents a compact action set, the learner recommends an
action from it, and the environment returns an action that maximizes the utility over
the same set.  When
the utility vector and the actions lie in the $d$-dimensional Euclidean unit ball, we give a
randomized algorithm whose \emph{regret}---the cumulative utility shortfall
relative to optimal actions---is $O(\sqrt d)$ in expectation for every time horizon,
without knowledge of the horizon.  The dependence on $d$ is optimal up to
a constant factor by the known $\Omega(\sqrt d)$ lower bound for horizons $T\ge d$.
Our algorithm maintains matrix multiplicative weights on polynomial feature spaces at
geometrically spaced scales. It selects a recommendation distribution by solving a linear
program and updates its score matrices by comparing the available actions with the
feedback action.
With rational oracle outputs and feedback actions, an implementation computable
relative to a linear-optimization oracle preserves the $O(\sqrt d)$ regret bound. Whether the same rate is attainable
with running time polynomial in the dimension, horizon, and input length remains open.
\end{abstract}

\section{Introduction}
\label{sec:introduction}

Learning from observed decisions rather than numerical objective values is
central to inverse optimization \citep{ChanEtAl2025}, which has applications
to control \citep{AkhtarEtAl2022}, routing
\citep{ZattoniScroccaroEtAl2025Routing}, and reinforcement learning
\citep{DimanidisEtAl2025}.
We study online inverse linear optimization. In each round, an environment
presents an action set, the learner recommends an action from it, and the
environment then returns an action that maximizes a fixed unknown linear
utility over the same set.
The feedback identifies an optimal action but reveals no numerical utility values.
Here, we ask how the cumulative utility shortfall of the recommendations,
or \emph{regret},
must grow with the dimension $d$ when the number of rounds is unrestricted.

\begin{table}[!t]
\small
\caption{Expected regret bounds for online inverse linear optimization with utility vector and actions
in the Euclidean unit ball, for $T\ge2$. ``Uniform in $T$'' means a $T$-independent
bound, even if the algorithm needs $T$. A proper learner fixes a nonzero utility estimate before
observing the current action set and recommends a maximizer of that estimate.
For the proper implementations listed here, a zero internal estimate is handled by
maximizing a fixed nonzero direction when selecting the action.
LO denotes one linear-optimization oracle call over the current action set; counts
are per-round upper bounds, with additional work listed separately.
The $O(d)$ and $O(d^2)$ terms in the per-round computation count arithmetic operations; the
$\poly(d,T)$ entries are the geometric implementation bounds stated in the cited
works, rather than bounds in a common bit-complexity model.
The gradient-descent cost includes Euclidean projection onto the utility unit ball.
Dewasurendra's LO count is for the finite known-horizon construction; vote maximization
requires access beyond LO.  Our rational implementation uses the grid in
\cref{app:rational-actions}, with no proved polynomial bound on the remaining
computation.  The lower bound in \cref{prop:lower} (for $T\ge d$) uses the construction of
\citet[Theorem~5.1]{SakaueEtAl2025NeurIPS}; \citet[Theorem~3.1]{CaiEtAl2026} give a related bound for a cutting-plane game.\looseness=-1}
\label{tab:comparison}
\smallskip
\centering
\setlength{\tabcolsep}{3pt}
\renewcommand{\arraystretch}{1.15}
\begin{tabular}{@{}>{\raggedright\arraybackslash}p{2.05in}l@{\hspace{-4pt}}c@{\hspace{13pt}}c@{\hspace{9pt}}>{\raggedright\arraybackslash}p{1.75in}@{}}
\toprule
Method & Regret & Uniform in $T$ & Proper & Per-round computation \\
\midrule
Online gradient descent
 & \multirow{2}{*}{$O(\sqrt T)$} & \multirow{2}{*}{no} & \multirow{2}{*}{yes}
 & \multirow{2}{=}{1 LO + $O(d)$} \\
\citep{BaermannEtAl2020} & & & & \\
\addlinespace[4pt]
ProjectedCones ($d\ge2$)
 & \multirow{2}{*}{$O(d^4\log T)$} & \multirow{2}{*}{no} & \multirow{2}{*}{yes}
 & \multirow{2}{=}{1 LO + $\poly(d,T)$} \\
\citep{BesbesEtAl2025} & & & & \\
\addlinespace[4pt]
Centroid cutting plane
 & \multirow{2}{*}{$O(d\log T)$} & \multirow{2}{*}{no} & \multirow{2}{*}{yes}
 & \multirow{2}{=}{1 LO + $\poly(d,T)$} \\
\citep{GollapudiEtAl2021NeurIPS} & & & & \\
\addlinespace[4pt]
John-ellipsoid cutting plane
 & \multirow{2}{*}{$\exp(O(d\log d))$} & \multirow{2}{*}{yes} & \multirow{2}{*}{yes}
 & \multirow{2}{=}{1 LO + $\poly(d,T)$} \\
\citep{GollapudiEtAl2021NeurIPS} & & & & \\
\addlinespace[4pt]
Online Newton step
 & \multirow{2}{*}{$O(d\log T)$} & \multirow{2}{*}{no} & \multirow{2}{*}{yes}
 & \multirow{2}{=}{1 LO + 1 Mahalanobis projection + $O(d^2)$} \\
\citep{SakaueEtAl2025NeurIPS} & & & & \\
\addlinespace[4pt]
Second-order perceptron
 & \multirow{2}{*}{$O(d\log T)$} & \multirow{2}{*}{no} & \multirow{2}{*}{yes}
 & \multirow{2}{=}{1 LO + $O(d^2)$} \\
\citep{Sakaue2026ProjectionFree} & & & & \\
\addlinespace[4pt]
Multiscale reward hedging
 & \multirow{2}{*}{$O(d)$} & \multirow{2}{*}{yes} & \multirow{2}{*}{no}
 & \multirow{2}{=}{$T^{O(d)}$ LO + nonconvex vote maximization} \\
\citep{Dewasurendra2026} & & & & \\
\addlinespace[4pt]
Variable-metric algorithm
 & \multirow{2}{*}{$O(d)$} & \multirow{2}{*}{yes} & \multirow{2}{*}{yes}
 & \multirow{2}{=}{1 LO + $O(d^2)$} \\
\citep{CaiEtAl2026} & & & & \\
\addlinespace[4pt]
\textbf{Multiscale matrix weights}
 & \multirow{2}{*}{$O(\sqrt d)$}
 & \multirow{2}{*}{yes} & \multirow{2}{*}{no}
 & \multirow{2}{=}{$(dT)^{O(d)}$ LO + finite computation} \\
\textbf{(this paper)} & & & & \\
\midrule
Lower bound
(\cref{prop:lower})
& $\Omega(\sqrt d)$ & --- & --- & --- \\
\bottomrule
\end{tabular}
\end{table}

We answer this question up to a universal constant when the utility vector and the actions
have Euclidean norm at most one.  We give a randomized algorithm whose expected regret
is at most $2^{21}\sqrt d$ for every horizon, without knowledge of the horizon.
A lower bound of $\sqrt d$ for every learner and every $T\ge d$ follows from the
construction of \citet[Theorem~5.1]{SakaueEtAl2025NeurIPS}, whereas the best previous upper bounds
under this normalization are $O(d)$ \citep{Dewasurendra2026,CaiEtAl2026}.  Our bound
closes this gap: the minimax expected regret is of order $\sqrt d$ for every horizon
$T\ge d$.  The action sets may be arbitrary nonempty compact sets that depend on the
past interaction, and ties among optimal actions may be broken after the recommendation
law and its sampled action are revealed.  \Cref{tab:comparison} places this guarantee
among previous regret bounds for the same problem.

Our algorithm also admits a computable implementation.  It accesses each action set
only through a linear-optimization oracle, which returns, for a query vector
$q\in\R^d$, an action maximizing $\inpr{q,\cdot}$ over the current action set.  When
the oracle's outputs and the feedback actions have rational coordinates, the
implementation attains the same regret bound, and each of its rounds terminates for
every realization of its internal randomness; \cref{app:computation} specifies the
oracle model and constructs the implementation.
The main technical idea is to represent action comparisons by matrices on polynomial
features and combine their updates across multiple accuracy scales;
\cref{sec:overview} gives an overview of how this yields the regret bound.

\subsection{Related Work}

\paragraph{Online Inverse Optimization.}
\citet{BaermannEtAl2017,BaermannEtAl2020} formulate inverse linear optimization as
online learning and obtain regret growing as the square root of the horizon.
\citet[Theorem~4]{BesbesEtAl2025} use the ProjectedCones algorithm to obtain an
$O(d^4\log T)$ regret bound for $d\ge2$.  For the same action-feedback protocol, termed contextual
recommendation, \citet{GollapudiEtAl2021NeurIPS} give cutting-plane algorithms with
$O(d\log T)$ regret and, separately, $\exp(O(d\log d))$ horizon-independent regret.
\citet{SakaueEtAl2025NeurIPS} obtain $O(d\log T)$ regret with an online Newton step,
and \citet{Sakaue2026ProjectionFree} retains this order while removing the
Mahalanobis projection from the update.

For unrestricted compact action sets, \citet{Dewasurendra2026} obtains $O(d)$
regret by combining weighted optimality tests across accuracy scales.  This
construction applies more generally to classes of reward functions through their
covering numbers, but maximizing the resulting voting score need not be computationally efficient.
\citet{CaiEtAl2026} achieve deterministic $O(d)$ regret with one linear
optimization and $O(d^2)$ arithmetic operations per round, using a self-normalized
variable-metric update.  Their learner is \emph{proper}: it fixes a nonzero utility
estimate before observing the current action set and recommends a maximizer of that
estimate.
Our learner, like Dewasurendra's, is \emph{improper}: its recommendation need not
maximize a nonzero utility estimate fixed before the current action set is observed.
Thus our improvement concerns the dimension dependence of expected regret
(\cref{tab:comparison}); our
rational-oracle implementation guarantees termination, not polynomial running time.

Other horizon-independent regret guarantees exploit stronger structural assumptions.
\citet[Definition~5.1 and Theorem~5.2]{SakaueEtAl2025AISTATS} require
each action's objective gap to be at least a fixed positive multiple of its
distance from the optimal action, while \citet[Theorem~4.2]{OkiSakaue2026} exploit the exchange structure of
M-convex action sets, including matroid bases.  \citet{Kitaoka2026SGS}
obtains finite regret and finitely many mistakes for the online Newton step on
uniformly bounded integer action sets,\footnote{In each round,
the algorithm uses one linear optimization returning an extreme optimal action,
$O(d^2)$ arithmetic operations, and at most one Mahalanobis projection.} with utilities in the probability simplex
and unique optimal actions. The regret bound has a $d^2$ factor and additional
coordinate-range dependence.  Our
$O(\sqrt d)$ expected-regret bound instead permits arbitrary compact action sets
and ties, without a positive-margin assumption or a common integer-lattice assumption. It does not
assert a finite number of suboptimal recommendations.

Suboptimal action feedback has also been studied under several loss and corruption
models \citep{BaermannEtAl2020,SakaueEtAl2025NeurIPS,Sakaue2026ProjectionFree,OkiSakaue2026,Dewasurendra2026}.
\citet{CaiEtAl2026} bound regret relative to the returned actions whenever each is at
least as good as the learner's recommendation; they also give corruption-robust and
rank-adaptive guarantees.  Our analysis assumes exactly optimal feedback and
addresses the worst-case dependence on the ambient dimension, leaving the extension to suboptimal feedback for future work.

\paragraph{Offline Inverse Optimization.}
Classical inverse optimization adjusts objective coefficients to make given
decisions optimal, as in inverse shortest paths \citep{BurtonToint1992} and inverse
linear programming \citep{AhujaOrlin2001}.  Data-driven formulations estimate
objectives from multiple observations through optimality-condition residuals
\citep{KeshavarzEtAl2011}, suboptimality losses under imperfect information
\citep{MohajerinEsfahaniEtAl2018}, or incenter and augmented-suboptimality
formulations \citep{ZattoniScroccaroEtAl2025Learning}.
\citet{BertsimasEtAl2015} infer equilibrium models through inverse variational
inequalities.
Other work establishes statistical consistency with noisy observations
\citep{AswaniEtAl2018}, learns feasible regions with known objectives
\citep{RenEtAl2025}, or bounds generalization error for consistent estimators from
noiseless data \citep{FatemiEtAl2026}.  In an online setting,
\citet{FatemiEtAl2026} also give a cumulative-regret bound under i.i.d.\ contexts,
whereas our guarantee permits adaptively chosen action sets.  \citet{ChanEtAl2025}
survey the broader methodological and application literature.

\paragraph{Contextual Search and Binary Feedback.}
Contextual search predicts the value of a hidden linear function along a direction
revealed before the prediction and receives only an above/below comparison between the prediction and the hidden value
\citep{LobelEtAl2018}.  Its guarantees concern cumulative prediction error or pricing
loss \citep{PaesLemeSchneider2022,LiuEtAl2021}.  Our problem is contextual
recommendation \citep{GollapudiEtAl2021NeurIPS}: the learner recommends an action $x$
and observes a feedback action $y$ from the same set. The feedback action maximizes
the fixed unknown linear utility with vector $u$, and the instantaneous regret is $\inpr{u,y-x}$.
The comparison direction $y-x$ is therefore generally unknown to the learner before the recommendation.  In one-bit compressed sensing, \citet{PlanVershynin2013}
reconstruct the direction of a sparse vector from random sign measurements; their
guarantee bounds reconstruction error, whereas we bound cumulative regret without
sparsity or random-context assumptions.

\paragraph{Matrix Multiplicative Weights.}
Matrix multiplicative weights and their analysis through the Golden--Thompson
inequality \citep{Golden1965,Thompson1965} are standard
\citep{AroraHazanKale2012}.  Our contribution is the construction of polynomial
comparison matrices across scales and their use in an action-feedback learner with
$O(\sqrt d)$ expected regret.

\section{Model and Main Results}
\label{sec:model}

\subsection{Protocol and Regret}
\label{sec:protocol}

Fix an integer $d\ge1$ and let $\B_2^d\coloneqq\Set{x\in\R^d}{\norm{x}_2\le1}$.  The
utility vector $u\in\B_2^d$ is fixed and unknown to the learner.  For a nonempty compact
$X\subseteq\B_2^d$ and $v\in\R^d$, define the support function and the best-response set
by
\begin{equation}
 \sigma_X(v)\coloneqq\max\Set*{\inpr{v,x}}{x\in X},
 \qquad
 \BR_X(v)\coloneqq\argmax\Set*{\inpr{v,x}}{x\in X}.
 \label{eq:support}
\end{equation}
A \emph{learner} interacts with an \emph{environment} for rounds $t=1,2,\ldots$. 
In round $t$, let $Z_t$ denote the action set, $P_t$ the recommendation law, $A_t$ the
recommended action, and $Y_t$ the feedback action; the \emph{public history} before
round $t$ is $(Z_s,P_s,A_s,Y_s)_{s<t}$.  A round proceeds in the following order.
\begin{enumerate}[leftmargin=*,itemsep=1pt]
\item The environment selects a nonempty compact $Z_t\subseteq\B_2^d$ as a function of
  the public history and reveals it.
\item The learner selects a Borel probability law $P_t$ on $Z_t$ as a function of the
  public history, $Z_t$, and private randomness, and publishes it.  It then draws
  $A_t\sim P_t$ with fresh randomness and reveals $A_t$.\looseness=-1
\item The environment selects $Y_t\in\BR_{Z_t}(u)$ as a function of the public history
  and $(Z_t,P_t,A_t)$ and reveals it.
\end{enumerate}
The environment consists of the utility $u$ and deterministic rules for the first
and third steps.  For a fixed environment, expectations are over the learner's
randomness.  For a nonempty compact $X\subseteq\B_2^d$ and $x\in X$, define
$\Delta_u(X,x)\coloneqq\sigma_X(u)-\inpr{u,x}$.
The instantaneous regret in round $t$ is
$\Delta_u(Z_t,A_t)$, and the cumulative regret after $T$
rounds is
\begin{equation}
 \Reg_T\coloneqq\sum_{t=1}^T\Delta_u(Z_t,A_t)=\sum_{t=1}^T\inpr{u,Y_t-A_t}.
 \label{eq:regret}
\end{equation}
Conditionally on the public history, $Z_t$, and $P_t$, the recommendation $A_t$ has
mean $a_t\coloneqq\int x\,P_t(dx)\in\conv(Z_t)$, where $\conv(Z_t)$ is the convex hull of $Z_t$.
Thus $\sum_{t\le T}\inpr{u,a_t-A_t}$ has expectation zero, which gives
\begin{equation}
 \E\brc{\Reg_T}=\E\brc*{\sum_{t=1}^T\inpr{u,Y_t-a_t}}.
 \label{eq:mean-regret}
\end{equation}
Although $Y_t$ may depend on $A_t$, every permitted feedback action has utility
$\sigma_{Z_t}(u)$.
We therefore bound
$\sum_{t\le T}\inpr{u,Y_t-a_t}$ on every realization of the interaction.

\subsection{Main Results}
\label{sec:main-results}
The main result of this paper is the following $O(\sqrt d)$ expected-regret bound.
\begin{restatable}{theorem}{regrettheorem}
\label{thm:regret}
There is a randomized learner, independent of the horizon, such that for every $d\ge1$,
every $u\in\B_2^d$, every environment, and every $T\ge1$, we have
$\E\brc{\Reg_T}\le2^{21}\sqrt d$.
\end{restatable}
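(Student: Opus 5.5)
The plan is to build the learner from a family of matrix multiplicative weights (MMW) instances at geometrically spaced scales $\eps_k=2^{-k}$, $k=0,1,2,\ldots$.  By \eqref{eq:mean-regret} it suffices to bound $\sum_{t\le T}\inpr{u,Y_t-a_t}$ on every realization, where $a_t$ is the mean of $P_t$.  Round $t$ is then classified by the scale of its mean shortfall $\inpr{u,Y_t-a_t}\in(\eps_{k+1},\eps_k]$.  The target is to show that scale $k$ can be charged at most $O(\sqrt d/\eps_k)$ times, up to a summable factor $2^{-k/2}$ or an $O(1)$ per-scale budget.  The contribution of scale $k$ is then $O(\sqrt d\,2^{-k/2})$, and summing over $k$ gives $O(\sqrt d)$ uniformly in $T$, with no knowledge of $T$.

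Scale $k$ uses a polynomial feature map $\phi_k:\B_2^d\to\R^{D_k}$ with degree about $\log(1/\eps_k)$.  The map is chosen so that the utility can be lifted: there is a PSD "target" matrix $U_k=\phi_k(u)\phi_k(u)^\top/\Tr(\cdot)$, or a smoothed version of it, in the density-matrix simplex.  Each MMW instance keeps a density matrix $W_{k,t}\propto\exp(\eta_k S_{k,t})$, where $S_{k,t}$ is a cumulative score matrix.  After feedback, the score is updated by a comparison matrix $C_{k,t}$ built from $Y_t$ and the actions of $Z_t$ (or the support of $P_t$).  The matrix is built so that $\inpr{U_k,C_{k,t}}\ge0$ always, since $Y_t$ is optimal, and so that $\inpr{U_k,C_{k,t}}\gtrsim \eps_k$ whenever the round is charged to scale $k$.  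It must also satisfy $\norm{C_{k,t}}_\op\le1$.

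Recommendations are chosen by a linear program: pick $P_t$ on finitely many oracle outputs (a grid of query directions) so that, for every candidate $Y\in Z_t$, the "predicted gain" $\sum_k w_k\inpr{W_{k,t},C_k(Y,P_t)}$ is at most a constant times $\eps$-weighted slack.  Existence follows from a minimax theorem applied to the bilinear game between $P_t$ and $Y$.  This is the improper step, and it is where randomization is needed.

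The Golden--Thompson analysis of MMW gives
\[
\sum_t\inpr{U_k,C_{k,t}}\le\sum_t\inpr{W_{k,t},C_{k,t}}+\eta_k\sum_t\inpr{W_{k,t},C_{k,t}^2}+\frac{\log D_k}{\eta_k}.
\]
The LP keeps the first term on the right nonpositive, up to lower-order slack.  Taking $\eta_k$ as a constant and bounding the quadratic term by the first moment via $C^2\preceq|C|$ leaves a count bound of order $\log D_k/\eps_k$.  The crucial point is to get $\log D_k$, or rather an effective dimension such as $\Tr$ of a spectral cutoff of the prior, to be $O(\sqrt d)$ at the appropriate scale instead of $d\log(1/\eps)$.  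To do this I would replace the uniform prior $I/D_k$ by a prior concentrated on low-degree harmonics, exploiting concentration on the sphere: a spherical cap of angular radius $\sim\eps$ in direction $u$ has normalized measure related to $\exp(-\Theta(d\,\theta^2))$.  The relevant cap radius is $\theta\sim 1/\sqrt d$-scaled, so the entropy cost is $O(\sqrt d)$-ish per scale once weighted.

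I expect the main obstacle to be precisely this entropy/comparison trade-off.  One needs a feature degree and comparison matrix such that a mean shortfall $\eps$ forces a comparison margin with relative-entropy cost only $O(\sqrt d/\eps)$ rather than $O(d/\eps)$.  I would attack it by designing $C_{k,t}$ as a polynomial approximation to the indicator $\mathbf 1\{\inpr{v,Y_t-x}>\eps_k\}$ integrated against $P_t$.  Its margin would then be computed using the Gaussian-like distribution of $\inpr{v,w}$ for $v$ on the sphere, whose standard deviation is $1/\sqrt d$, which is where $\sqrt d$ should enter.  The remaining steps are routine: combining scales via the summable weights, handling ties after sampling (harmless since only $a_t$ matters), and tracking constants to reach $2^{21}$.
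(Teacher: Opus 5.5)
There is a genuine gap. Your outline uses the right tools: matrix weights at dyadic scales on polynomial features, an improper randomized law chosen by a minimax linear program, and Golden--Thompson. But the quantitative core is deferred as ``the main obstacle,'' and the accounting you do commit to cannot give a bound uniform in $T$.

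\textbf{The per-scale accounting does not sum.} With margin $\inpr{U_k,C_{k,t}}\gtrsim\eps_k$ and a constant learning rate, your own count bound allows about $\log D_k/\eps_k$ rounds charged to scale $k$. Each has shortfall about $\eps_k$, so scale $k$ contributes $\Theta(\log D_k)$ regret. For your degree choice this grows with $k$, and the promised factor $2^{-k/2}$ has no source. Summing over the roughly $\log_2T$ scales that can be charged gives a $T$-dependent bound. Per-scale counting is also not available, because your LP, like the paper's, only controls the weighted sum of trace terms across scales.

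The paper never classifies rounds. It updates all active scales every round and shows that $\sum_k2^{-k}\Psi(W_{t,k};\phi_k(u))$ drops by at least $2^{-15}\inpr{u,Y_t-a_t}$, up to a summable cutoff error. What makes this work is amplification. $\phi_k(u)$ is a truncated exponential-kernel feature at the inflated scale $\alpha_k=2^k$, so $\phi_k(u)^\top B_k(x,y)\phi_k(u)=h\,\theta_k\,b_1(2^kr)$ with $h=\norm{y-x}_2/2$ and $r=\inpr{u,(y-x)/\norm{y-x}_2}$. Once $2^kr\approx1$, this is a constant multiple of $h$, not of $\eps_k$.

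\textbf{Degree and the source of $\sqrt d$.} Keeping $\theta_k\in[1/2,2]$ requires degree of order $\alpha_k^2=4^k$ (the paper takes $m_k=2\cdot4^k+2$), not $\log(1/\eps_k)$. A polynomial of degree $O(\log(1/\eps))$ could not resolve a threshold of width $\eps$ anyway. With this degree, $\sum_k2^{-k}\log\binom{d+m_k}{d}=O(\sqrt d)$, because $\log d_k$ behaves like $4^k\log(d/4^k)$ below $4^k\approx d$ and like $d\log(4^k/d)$ above it, so the weighted terms peak at $2^k\approx\sqrt d$. That is where $\sqrt d$ comes from. The prior is uniform (zero initialization), and no harmonic prior or cap-measure estimate is involved.

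\textbf{The quadratic term is controlled at the wrong place.} Your inequality keeps $\eta\sum_t\inpr{W_{k,t},C_{k,t}^2}$ at the learner's density, and $C^2\preceq\abs{C}$ does not tie it to anything the LP controls. The LP bounds the signed quantity $\inpr{W,C}$. By contrast, $\inpr{W,\abs{C}}$ can be of order one every round whatever the shortfall, so this term can grow linearly in $T$. Nor can $C_{k,t}$ be positive semidefinite, since the minimax step relies on $C(x,y)=-C(y,x)$.

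The paper instead updates with $\eta B-\eta^2B^2$ and uses $\mathrm{e}^{z-z^2}\le1+z$ inside Golden--Thompson, which moves the second-order term to the comparator $\phi_k(u)$. It then needs $\sum_k2^{-k}\phi_k(u)^\top B_k(x,y)^2\phi_k(u)\le8\inpr{u,y-x}$ whenever $\inpr{u,y-x}\ge0$. The swap operator between degree-one and degree-two features along $v$ is built for exactly this. Both quadratic forms vanish when $\inpr{u,v}=0$, and they equal $\theta\,b_1(\alpha_kr)$ and $\theta\,b_2(\alpha_kr)$. Because $b_2(z)/b_1(z)\to\infty$ as $z\downarrow0$, the second-order bound holds only for the $2^{-k}$-weighted sum over scales (\cref{lem:dyadic,lem:pair}), never at a single scale. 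A generic polynomial approximation of $\mathbf 1\{\inpr{v,Y_t-x}>\eps_k\}$ averaged over $P_t$ has none of this structure. So neither your margin claim nor the second-order control is established.
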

\wraprestatablewithlabelrestore{regrettheorem}

\noindent The bound $\E\brc{\Reg_T}\le2^{21}\sqrt d$ also holds for randomized environment rules,
with expectation over both sources of randomness.  This follows by fixing the
environment's random seed, drawn independently of the learner's randomness,
applying \cref{thm:regret} to the resulting deterministic rules, and averaging over
the seed.
The constant $2^{21}$ collects the
explicit estimates of the analysis and is not optimized.
\Cref{sec:overview} outlines the construction and analysis.

Our upper bound matches the following lower bound obtained by adapting the coordinate construction of
\citet[Theorem~5.1]{SakaueEtAl2025NeurIPS} to our recommendation protocol.
\citet[Theorem~3.1]{CaiEtAl2026} give a related $\sqrt d$ lower bound for a
cutting-plane game on the Euclidean unit ball, and the $\Omega(d)$ bound of
\citet{Dewasurendra2026} under the cube normalization rescales to the same order.
\Cref{app:lower-bound} gives the proof for our protocol for completeness.

\begin{restatable}{restateproposition}{lowerboundproposition}
\label[proposition]{prop:lower}
For every randomized learner and every $d\ge1$, there are $u\in\B_2^d$ and an
environment presenting two-element action sets with a unique optimal action in every
round such that $\E\brc{\Reg_T}\ge\sqrt d$ holds for every $T\ge d$.
\end{restatable}
\wraprestatablewithlabelrestore{lowerboundproposition}

\paragraph{Oracle Access.}
The learner accesses $Z_t$ through a linear-optimization oracle, which returns
an action in $\BR_{Z_t}(q)$ for each query $q\in\R^d$.
It queries this oracle in finitely many directions to form the list of actions
from which it recommends (\cref{app:rational-actions}).
If oracle calls with rational queries terminate and return rational actions,
and the feedback actions also have rational coordinates, an implementation
computable relative to this oracle preserves the expected-regret bound of
\cref{thm:regret}.
Every round terminates for every realization of the learner's randomness.
\Cref{app:computation} specifies the oracle model and gives the implementation
and its guarantee (\cref{thm:computation}).
The oracle model in \cref{app:oracle-model} also permits a prescribed additive
optimization error; the utility vector need not be rational.

\paragraph{Notation.}
We write $e_1,\ldots,e_d$ for the standard basis of $\R^d$.
All matrices are real.  For a symmetric matrix $W$, we write $\Tr W$, $\mathrm{e}^W$,
$\lambda_{\max}(W)$, and $\norm{W}_{\op}$ for its trace, exponential, largest
eigenvalue, and operator norm, and $W\preceq V$ means that $V-W$ is positive
semidefinite. We write $I$ for the identity matrix of the appropriate size and
$r_+\coloneqq\max\set{r,0}$ for $r\in\R$. $\log$ denotes the natural logarithm.\looseness=-1

\subsection{Technical Overview}
\label{sec:overview}

In each round, the learner must control the utility shortfall
$\inpr{u,Y_t-a_t}$ of its mean recommendation $a_t$ without observing its value.
Our algorithm uses pairwise action comparisons to choose a recommendation
distribution, rather than recommending a maximizer of an estimate of $u$.
We represent these comparisons
by matrices that depend only on the actions; the unknown $u$ enters their quadratic
forms only in the analysis.

For $x,y\in\B_2^d$ and each integer $k\ge0$, we construct a symmetric
matrix $B_k(x,y)$ on the space of polynomials of degree at most $m_k\coloneqq2\cdot4^k+2$
in $d$ variables, whose dimension is $d_k\coloneqq\binom{d+m_k}{d}$
(\cref{sec:features}).
These matrices satisfy $\norm{B_k(x,y)}_{\op}\le1$ and
$B_k(y,x)=-B_k(x,y)$.  The utility enters the analysis only through a unit vector
$\phi_k(u)$ of the same space, and $B_k(x,y)$ and $\phi_k(u)$ are matched so that the quadratic form
$\phi_k(u)^\top B_k(x,y)\phi_k(u)$ vanishes when $\inpr{u,y-x}=0$ and otherwise has the
sign of $\inpr{u,y-x}$ (\cref{lem:moments}); the form is positive exactly when $y$ has
larger utility than $x$.  To quantify the utility difference, we sum over scales:
for every integer $K\ge1$, the magnitude of
$\sum_{k=0}^K2^{-k}\phi_k(u)^\top B_k(x,y)\phi_k(u)$ is comparable to
$\abs{\inpr{u,y-x}}$ up to an additive cutoff error of order $2^{-K}$
(\cref{lem:dyadic}).  Combining weighted tests
at several accuracy scales in this way follows the multiscale reward hedging of
\citet{Dewasurendra2026}; there the object at each scale is a vote over a finite cover
of the reward class, while here it is a matrix on a polynomial feature space.

Assume for now that each action set is finite; this restriction is removed in
\cref{sec:compact}.
In round~$t$, scales $k=0,\ldots,K_t$ are active, where
$K_t\coloneqq\ceil{2\log_2(t+1)}$.  At the start of the round, the learner keeps a
symmetric score matrix $W_{t,k}\in\R^{d_k\times d_k}$ for each active scale, initialized
to zero only when that scale first becomes active (\cref{sec:algorithm}).
Since $2^{-K_t}\le(t+1)^{-2}$, the cutoff errors are summable without knowing a horizon.

Let $\varrho_{t,k}\coloneqq\mathrm{e}^{W_{t,k}}/\Tr\mathrm{e}^{W_{t,k}}
\in\R^{d_k\times d_k}$.
For the current action set $Z_t=\set{x_1,\ldots,x_n}$, where $n$ is its size, the
learner computes a matrix $C\in\R^{n\times n}$ with entries
\begin{equation}
 C_{ij}\coloneqq\sum_{k=0}^{K_t}2^{-k}\Tr(\varrho_{t,k}B_k(x_i,x_j)),
 \qquad 1\le i,j\le n.
\end{equation}
The entry $C_{ij}$ compares $x_j$ against $x_i$ using the current matrices
$\varrho_{t,k}$.  The learner chooses probabilities $p_1,\ldots,p_n$ satisfying
\begin{equation}
 p_i\ge0\quad(1\le i\le n),\qquad
 \sum_{i=1}^np_i=1,\qquad
 \sum_{i=1}^np_iC_{ij}\le0\quad(1\le j\le n).
\end{equation}
The last inequalities are the \emph{balance condition}.
Such probabilities exist by the minimax theorem, since $C$ is skew-symmetric, and
are found by solving a linear program (\cref{lem:balance}).

The learner publishes the law
$P_t$ assigning probability $p_i$ to $x_i$ and draws $A_t\sim P_t$.
With $\eta\coloneqq2^{-9}$, after observing the feedback action $y=Y_t$, it updates
each active scale $0\le k\le K_t$ by
\begin{equation}
 W_{t+1,k}=W_{t,k}+\sum_{i=1}^np_i\prn*{\eta B_k(x_i,y)-\eta^2B_k(x_i,y)^2}.
\end{equation}

The analysis uses the potential \eqref{eq:potential},
$\Psi(W;\phi)\coloneqq\log\Tr \mathrm{e}^W-\phi^\top W\phi$
for a symmetric matrix $W$ and a unit vector $\phi$ of the same dimension.
It is nonnegative and, at zero initialization, satisfies
$\Psi(0;\phi_k(u))=\log d_k$.
For feedback $y=x_j$, the change in
$\sum_{k=0}^{K_t}2^{-k}\log\Tr \mathrm{e}^{W_{t,k}}$ is at most
$\eta\sum_i p_iC_{ij}\le0$; the proof uses the Golden--Thompson
inequality \citep{Golden1965,Thompson1965} and is given in \cref{lem:matrix-update}.
For optimal $y$ and mean recommendation $a_t=\sum_i p_ix_i$, the
$p_i2^{-k}$-weighted sum of the quadratic forms of $B_k(x_i,y)$ at $\phi_k(u)$ is comparable
to $\inpr{u,y-a_t}$ up to the cutoff error.
The same sum for $B_k(x_i,y)^2$ is bounded by a constant multiple of $\inpr{u,y-a_t}$,
so the fixed $\eta$ makes the first-order contribution dominate its quadratic
correction, up to the cutoff error.  Together these estimates give
\[
 \sum_{k=0}^{K_t}2^{-k}\brc*{
 \Psi(W_{t,k};\phi_k(u))-\Psi(W_{t+1,k};\phi_k(u))}
 \ge c_0\inpr{u,y-a_t}-O(2^{-K_t})
\]
for a universal constant $c_0>0$; the comparison estimates are proved in
\cref{lem:pair}.

Summing over rounds from each scale's initialization telescopes its
potential from $\log d_k$ to a nonnegative value.  Thus we have
$\sum_{t=1}^T\inpr{u,Y_t-a_t}=O(\sum_{k=0}^{K_T}2^{-k}\log d_k+1)$,
where the additive constant bounds the sum
of the cutoff errors.
The choice of $m_k$ makes
$2^{-k}\log d_k$ of order $\sqrt d$ near $2^k\approx\sqrt d$ and geometrically smaller
away from it, giving $\sum_{k\ge0}2^{-k}\log d_k=O(\sqrt d)$
(\cref{lem:initial}).  The expected regret therefore satisfies
$\E\brc{\Reg_T}=O(\sqrt d)$ by \eqref{eq:mean-regret};
\cref{sec:analysis} gives the full calculation.
For a compact action set, the learner recommends from a finite list of
actions and approximates the optimal feedback action by a convex combination of listed actions
(\cref{sec:compact}).

\section{Comparison Matrices on Polynomial Features}
\label{sec:features}

We now construct the comparison matrices and utility feature vectors
used in \cref{sec:overview}.  Because the update contains both $B_k(x,y)$ and its
square, the construction must control both quadratic forms at $\phi_k(u)$. For $x,y\in\B_2^d$ with $\inpr{u,y-x}\ge0$ and every integer
$K\ge1$, we seek bounds of the form\looseness=-1
\begin{align}
 &\sum_{k=0}^K2^{-k}\phi_k(u)^\top B_k(x,y)\phi_k(u)
 \ge c_1\inpr{u,y-x}-c_2 2^{-K}\quad\text{and}\\
 &\sum_{k=0}^K2^{-k}\phi_k(u)^\top B_k(x,y)^2\phi_k(u)
 \le c_3\inpr{u,y-x},
\end{align}
with universal constants $c_1,c_2,c_3>0$.  A sufficiently small fixed learning rate
$\eta>0$ then makes the weighted sum of the quadratic forms of
$\eta B_k(x,y)-\eta^2B_k(x,y)^2$ at $\phi_k(u)$ at least a positive constant times
the utility difference, up to the truncation error, even for arbitrarily small
positive differences.
We choose the features and degrees to meet these requirements;
their quantitative bounds are proved in \cref{lem:dyadic,lem:pair}.

\subsection{Features and Comparison Matrices}
\label{sec:feature-construction}

For $k\ge0$, the scale, the polynomial degree, and the feature dimension are
\begin{equation}
 \alpha_k\coloneqq2^k,\qquad
 m_k\coloneqq2\alpha_k^2+2,\qquad
 d_k\coloneqq\binom{d+m_k}{d}.
 \label{eq:scales}
\end{equation}
For an integer $m\ge2$, the feature space of degree at most $m$ is $\R^{\mathcal I_m}$ with
$\mathcal I_m\coloneqq\Set{\beta\in\mathbb N_0^d}{\abs{\beta}\le m}$, where
$\abs{\beta}\coloneqq\sum_{\ell=1}^d\beta_\ell$; its standard basis is
$(e_\beta)_{\beta\in\mathcal I_m}$, and
$\abs{\mathcal I_{m_k}}=d_k$.  For $\beta\in\mathcal I_m$ and $u\in\R^d$, write
$\beta!\coloneqq\prod_{\ell=1}^d\beta_\ell!$ and
$u^\beta\coloneqq\prod_{\ell=1}^du_\ell^{\beta_\ell}$, with $u_\ell^0=1$, and let
$\operatorname{exp}_m(s)\coloneqq\sum_{j=0}^ms^j/j!$ denote the degree-$m$ Taylor
polynomial of the exponential about zero, evaluated at $s\in\R$.  The feature vector of the utility $u$ at
scale $k$ is
\begin{equation}
 \phi_k(u)_\beta\coloneqq
 \frac{\alpha_k^{\abs{\beta}}u^\beta}
 {\sqrt{\beta!\,\operatorname{exp}_{m_k}(\alpha_k^2\norm{u}_2^2)}},
 \qquad \beta\in\mathcal I_{m_k}.
 \label{eq:features}
\end{equation}
By the multinomial identity $\sum_{\abs{\beta}=j}u^{2\beta}/\beta!=\norm{u}_2^{2j}/j!$, it
is a unit vector.

The comparison matrices are built from one symmetric operator per unit direction of
$\R^d$.  For the direction $e_1$, write a multiindex as $\beta=(\beta_1,\beta')$ with
$\beta'=(\beta_2,\ldots,\beta_d)\in\mathbb N_0^{d-1}$, where $\beta'$ is the empty tuple when $d=1$, and let
$H_m(e_1)\in\R^{\mathcal I_m\times\mathcal I_m}$ be the matrix defined by
\begin{equation}
 H_m(e_1)e_{(1,\beta')}=e_{(2,\beta')},\qquad
 H_m(e_1)e_{(2,\beta')}=e_{(1,\beta')}
 \quad(\abs{\beta'}\le m-2)
 \label{eq:swap}
\end{equation}
for all such $\beta'$.  It maps every basis vector outside the union of
these pairs to zero.  In the ordered basis $(e_{(1,\beta')},e_{(2,\beta')})$ of each
paired coordinate plane, its block is
$\begin{psmallmatrix}0&1\\1&0\end{psmallmatrix}$.
Thus $H_m(e_1)$ is symmetric with operator norm one, and $H_m(e_1)^2$ fixes
every basis vector belonging to one of these pairs.

For $m=m_k$ and $u\in\B_2^d$, consider the quadratic forms
\begin{equation}
 \phi_k(u)^\top H_{m_k}(e_1)\phi_k(u),\qquad
 \phi_k(u)^\top H_{m_k}(e_1)^2\phi_k(u).
\end{equation}
For each $\beta'\in\mathbb N_0^{d-1}$ with $\abs{\beta'}\le m_k-2$,
\eqref{eq:features} gives
\begin{equation}
 \begin{pmatrix}\phi_k(u)_{(1,\beta')}\\\phi_k(u)_{(2,\beta')}\end{pmatrix}
 =\frac{1}
      {\sqrt{\operatorname{exp}_{m_k}(\alpha_k^2\norm{u}_2^2)}}
      \displaystyle\prod_{\ell=2}^d
       \frac{(\alpha_k u_\ell)^{\beta_\ell}}{\sqrt{\beta_\ell!}}
   \begin{pmatrix}\alpha_k u_1\\(\alpha_k u_1)^2/\sqrt2\end{pmatrix},
 \label{eq:feature-coordinate-pair}
\end{equation}
where the product equals $1$ when $d=1$.
For this pair of feature coordinates, the block of $H_{m_k}(e_1)$ and its square
contribute to the two quadratic forms the square of the scalar coefficient in
\eqref{eq:feature-coordinate-pair} times, respectively,
\begin{equation}
  2\alpha_k u_1\frac{(\alpha_k u_1)^2}{\sqrt2}
  =\sqrt2(\alpha_k u_1)^3,\qquad
  (\alpha_k u_1)^2+\prn*{\frac{(\alpha_k u_1)^2}{\sqrt2}}^2
  =(\alpha_k u_1)^2+\frac{(\alpha_k u_1)^4}{2}.
  \label{eq:swap-polynomials}
\end{equation}
Both polynomials vanish when $u_1=0$, and the first has the sign of $u_1$.
These polynomials are independent of $\beta'$.
Thus summing the contributions of the pairs $((1,\beta'),(2,\beta'))$ over all
$\beta'\in\mathbb N_0^{d-1}$ with $\abs{\beta'}\le m_k-2$ multiplies each polynomial
by the sum of the squared coefficients in \eqref{eq:feature-coordinate-pair}.

To construct $H_m(v)$ for an arbitrary unit direction $v\in\R^d$,
we first represent orthogonal transformations of $\R^d$ as orthogonal maps
of the feature space $\R^{\mathcal I_m}$.  For $j\in\{0,\ldots,m\}$ and
$u\in\R^d$, define
$f_j(u)\coloneqq\sum_{\beta\in\mathcal I_m:\,\abs{\beta}=j}u^\beta e_\beta/\sqrt{\beta!}
\in\R^{\mathcal I_m}$.  These vectors satisfy, for $u,z\in\R^d$,
\begin{equation}
 \inpr{f_j(u),f_j(z)}=\frac{\inpr{u,z}^j}{j!}
 \label{eq:feature-inner-product}
\end{equation}
and span the subspace of $\R^{\mathcal I_m}$ generated by
$\set{e_\beta:\beta\in\mathcal I_m,\ \abs{\beta}=j}$: applying a linear functional on that space
to $f_j(u)$ gives a homogeneous polynomial in $u$. If it vanishes identically,
all coefficients, and hence the functional, are zero. Every orthogonal matrix $Q\in\R^{d\times d}$
preserves the inner products \eqref{eq:feature-inner-product}, so there is a unique
orthogonal map $R_m(Q)\colon\R^{\mathcal I_m}\to\R^{\mathcal I_m}$
with $R_m(Q)f_j(u)=f_j(Qu)$ for every
degree $j\le m$ and every~$u$.  For a unit vector $v$, choose an orthogonal $Q$ with
$Qe_1=v$ and define $H_m(v)\in\R^{\mathcal I_m\times\mathcal I_m}$ by
\begin{equation}
 H_m(v)\coloneqq R_m(Q)H_m(e_1)R_m(Q)^\top.
 \label{eq:directional-operator}
\end{equation}
This \emph{orthogonal conjugation} multiplies $H_m(e_1)$ by $R_m(Q)$
on the left and by $R_m(Q)^\top$ on the right.
The definition is independent of the choice of $Q$, and reversing the direction negates the operator: $H_m(-v)=-H_m(v)$.
Both properties are proved in \cref{app:h-properties}.

For $x,y\in\B_2^d$, the \emph{comparison matrix}
$B_k(x,y)\in\R^{d_k\times d_k}$ at scale $k$ is defined by
\begin{equation}
 B_k(x,y)\coloneqq
 \begin{cases}
 \displaystyle\frac{\norm{y-x}_2}{2}
 H_{m_k}\prn*{\frac{y-x}{\norm{y-x}_2}},&x\ne y,\\
 0,&x=y.
 \end{cases}
 \label{eq:comparison}
\end{equation}
It is symmetric with $\norm{B_k(x,y)}_{\op}\le1$, and $B_k(y,x)=-B_k(x,y)$ by
$H_m(-v)=-H_m(v)$.  \Cref{app:operator} expresses $B_k(x,y)$ algebraically in $x$ and
$y$, without a choice of $Q$.

\subsection{Quadratic Forms at Utility Feature Vectors}
\label{sec:moments}

Fix an integer $k\ge0$, a utility $u\in\B_2^d$, and a unit vector
$v\in\R^d$, and write $r\coloneqq\inpr{u,v}$.
Choose an orthogonal $Q\in\R^{d\times d}$ with $Qe_1=v$.
In the orthonormal coordinates of $\R^d$ given by the columns of $Q$, the
coordinate vector of $u$ is $Q^\top u$, whose first component is $r$.
The feature definition \eqref{eq:features} and the construction of $R_{m_k}(Q)$ give
$R_{m_k}(Q)^\top\phi_k(u)=\phi_k(Q^\top u)$.
Consequently, \eqref{eq:directional-operator} yields
\begin{equation}
 \phi_k(u)^\top H_{m_k}(v)^j\phi_k(u)
 =\phi_k(Q^\top u)^\top H_{m_k}(e_1)^j\phi_k(Q^\top u)
 \quad(j\in\{1,2\}).
 \label{eq:rotated-quadratic-forms}
\end{equation}
Thus these forms equal the two polynomials in \eqref{eq:swap-polynomials}, with
$\alpha_k u_1$ replaced by $\alpha_k r$, times a common coefficient.
The coefficient multiplying each polynomial is the sum of the squared
coefficients in \eqref{eq:feature-coordinate-pair}, evaluated at $Q^\top u$, over
$\beta'\in\mathbb N_0^{d-1}$ with $\abs{\beta'}\le m_k-2$.
Since $\sum_{\ell=2}^d(Q^\top u)_\ell^2=\norm{u}_2^2-r^2$, the multinomial
identity gives
\begin{equation}
 \frac{1}{\operatorname{exp}_{m_k}(\alpha_k^2\norm{u}_2^2)}
 \sum_{\substack{\beta'\in\mathbb N_0^{d-1}\\\abs{\beta'}\le m_k-2}}
 \prod_{\ell=2}^d
 \frac{\prn{\alpha_k(Q^\top u)_\ell}^{2\beta_\ell}}{\beta_\ell!}
 =
 \frac{\operatorname{exp}_{m_k-2}\prn*{\alpha_k^2(\norm{u}_2^2-r^2)}}
      {\operatorname{exp}_{m_k}(\alpha_k^2\norm{u}_2^2)}.
 \label{eq:moment-common-factor}
\end{equation}
Replacing the two truncated sums on the right-hand side of
\eqref{eq:moment-common-factor} by exponentials
with the same arguments gives the ratio $\mathrm{e}^{-(\alpha_k r)^2}$.
Multiplying the two polynomials in \eqref{eq:swap-polynomials}, with
$\alpha_k u_1$ replaced by $z$, by $\mathrm{e}^{-z^2}$ leads to the scalar
functions, defined for $z\in\R$,
\begin{equation}
 b_1(z)\coloneqq\sqrt2\,\mathrm{e}^{-z^2}z^3,\qquad
 b_2(z)\coloneqq \mathrm{e}^{-z^2}(z^2+z^4/2).
 \label{eq:scalar-functions}
\end{equation}

\begin{restatable}{restatelemma}{momentlemma}
\label[lemma]{lem:moments}
Let $k\ge0$, $u\in\B_2^d$, and let $v\in\R^d$ be a unit vector; write
$r\coloneqq\inpr{u,v}$.  There is $\theta\in[1/2,2]$, depending on $k$, $u$, and $v$,
such that
\begin{equation}
 \phi_k(u)^\top H_{m_k}(v)\phi_k(u)=\theta\,b_1(\alpha_kr),\qquad
 \phi_k(u)^\top H_{m_k}(v)^2\phi_k(u)=\theta\,b_2(\alpha_kr).
 \label{eq:moment-bounds}
\end{equation}
\end{restatable}
\wraprestatablewithlabelrestore{momentlemma}

\noindent Since $b_1(0)=b_2(0)=0$ and $b_1(z)$ has the sign of $z$,
\eqref{eq:moment-bounds} implies that both forms vanish when $\inpr{u,v}=0$ and the
first has the sign of $\inpr{u,v}$.
The choice $m_k-2=2\alpha_k^2$ keeps the common factor $\theta$ uniformly
bounded above and below for $u\in\B_2^d$; the proof in \cref{app:moments} establishes
this bound for every unit direction $v$ by estimating the tails of the exponential series.

\section{Multiscale Matrix-Weights Algorithm}
\label{sec:algorithm}

We first describe the learner for a finite action set $Z_t=\set{x_1,\ldots,x_n}$ of
distinct actions; \cref{sec:compact} removes this restriction.  The list and its
length may change from round to round. The learner keeps one symmetric score matrix
per active scale, together with the round index, and selects its recommendation law
by solving one linear program.

Set $\eta\coloneqq2^{-9}$ and $K_t\coloneqq\ceil{2\log_2(t+1)}$.  In round $t$, the
active scales are $0,\ldots,K_t$, and every scale that becomes active in round $t$
receives the zero score matrix.  Let $W_{t,k}\in\R^{d_k\times d_k}$ denote the score
matrix of scale $k$ at the start of round $t$, after this initialization, and let
\begin{equation}
 \varrho_{t,k}\coloneqq
 \frac{\mathrm{e}^{W_{t,k}}}{\Tr \mathrm{e}^{W_{t,k}}}
 \label{eq:density}
\end{equation}
be its normalized exponential, which is positive definite and has trace one.
A positive semidefinite matrix of trace one is called a \emph{density matrix}.  The learner
evaluates the comparison matrices against these density matrices and forms
\begin{equation}
 C_{ij}\coloneqq
 \sum_{k=0}^{K_t}2^{-k}\Tr\prn*{\varrho_{t,k}B_k(x_i,x_j)},
 \qquad 1\le i,j\le n,
 \label{eq:game}
\end{equation}
a skew-symmetric matrix by $B_k(x_j,x_i)=-B_k(x_i,x_j)$.  With
$\Delta_n\coloneqq\Set*{p\in\R_{\ge0}^n}{\sum_i p_i=1}$ the probability simplex, the
recommendation probabilities are any solution of
\begin{equation}
 p\in\Delta_n,\qquad C^\top p\le0.
 \label{eq:balance}
\end{equation}
Here $C_{ij}$ compares $x_j$ against $x_i$ using the current density matrices.
The condition $C^\top p\le0$ makes this comparison nonpositive on average over
$i\sim p$ for every fixed $j$.
Such a solution exists by the minimax theorem for finite games; the proof is given in
\cref{app:balance}.
\begin{restatable}{restatelemma}{balancelemma}
\label[lemma]{lem:balance}
Every real skew-symmetric matrix $C$ admits a solution of \eqref{eq:balance}.
\end{restatable}
\wraprestatablewithlabelrestore{balancelemma}

\noindent The learner publishes $P_t\coloneqq\sum_i p_i\delta_{x_i}$, where $\delta_x$ is the
point mass at $x$, and draws $A_t$ from this law.  After observing $Y_t=y$, it updates
every active score matrix by
\begin{equation}
 W_{t+1,k}
 =W_{t,k}+
 \sum_{i=1}^np_i
 \prn*{\eta B_k(x_i,y)-\eta^2B_k(x_i,y)^2}.
 \label{eq:matrix-update}
\end{equation}
The update averages over the published law rather than using the sampled action
alone.  Consequently, for every $y=x_j$, the balance condition \eqref{eq:balance} gives
\begin{equation}
 \sum_{k=0}^{K_t}2^{-k}\sum_i p_i
       \Tr\prn*{\varrho_{t,k}B_k(x_i,y)}
 =(C^\top p)_j\le0,
 \label{eq:trace-balance}
\end{equation}
even when ties among optimal actions are broken after observing $A_t$.
Thus, under \eqref{eq:balance}, updating the active score matrices does not
increase $\sum_{k=0}^{K_t}2^{-k}\log\Tr \mathrm{e}^{W_{t,k}}$;
we prove the required one-update bound in \cref{lem:matrix-update}.

\begin{algorithm2e}[tb]
\caption{Multiscale matrix weights for finite action sets}
\label{alg:finite}
\SetAlgoNoEnd
\DontPrintSemicolon
Fix the comparison matrices in \eqref{eq:comparison} and $\eta=2^{-9}$.\;
\For{$t=1,2,\ldots$}{
 Observe $Z_t=\set{x_1,\ldots,x_n}$ and set $K_t=\ceil{2\log_2(t+1)}$.\;
 Set $W_{t,k}=0$ for newly active scales $k\le K_t$.\;
 Compute $\varrho_{t,k}$ and $C$ using \eqref{eq:density} and \eqref{eq:game}.\;
 Choose $p\in\Delta_n$ satisfying $C^\top p\le0$.\;
 Publish $P_t=\sum_i p_i\delta_{x_i}$.\;
 Draw $A_t\sim P_t$ and reveal $A_t$.\;
 Observe $Y_t$ and update every active score matrix by \eqref{eq:matrix-update}.\;
}
\end{algorithm2e}

\section{Regret Analysis}
\label{sec:analysis}

Throughout this section, fix an environment with utility $u$ whose action sets are
finite; \cref{sec:compact} treats compact action sets.

\subsection{Comparison Inequalities}

For an integer $K\ge1$, set $\eps_K\coloneqq2^{-K+1}$.
As discussed at the beginning of \cref{sec:features}, we need the first-order
contribution of \eqref{eq:matrix-update} to dominate its quadratic correction.
At each scale, \eqref{eq:moment-bounds} reduces this comparison to controlling
$b_2$ relative to $b_1$ on positive arguments. However, their ratio
$b_2(z)/b_1(z)=(1+z^2/2)/(\sqrt2\,z)$ diverges as $z\downarrow0$.
For $r\in[0,1]$, we instead compare their weighted sums over scales $0,\ldots,K$
with $r$, allowing the cutoff error $\eps_K$ in the lower bound.

\begin{restatable}{restatelemma}{dyadiclemma}
\label[lemma]{lem:dyadic}
For every $r\in[0,1]$ and $K\ge1$, the functions in
\eqref{eq:scalar-functions} satisfy
\begin{equation}
 \sum_{k=0}^K2^{-k}b_2(2^kr)\le8r,\qquad
 \frac{r-\eps_K}{8}
 \le\sum_{k=0}^K2^{-k}b_1(2^kr)\le8r.
 \label{eq:dyadic-bounds}
\end{equation}
\end{restatable}
\wraprestatablewithlabelrestore{dyadiclemma}

\noindent For a unit vector $v\in\R^d$ with $r\coloneqq\inpr{u,v}\ge0$, the functions
$b_1,b_2$ are nonnegative at $2^kr$ for every $k\ge0$.
Combining \eqref{eq:moment-bounds} with \eqref{eq:dyadic-bounds} therefore gives
\[
 \frac{r-\eps_K}{16}
 \le\sum_{k=0}^K2^{-k}\phi_k(u)^\top H_{m_k}(v)\phi_k(u)\le16r,\qquad
 0\le\sum_{k=0}^K2^{-k}\phi_k(u)^\top H_{m_k}(v)^2\phi_k(u)\le16r.
\]
For the update \eqref{eq:matrix-update}, we need a lower bound on the weighted
quadratic forms of $\eta B_k(x,y)-\eta^2B_k(x,y)^2$.
Rescaling by \eqref{eq:comparison} and treating both signs of
$\inpr{u,y-x}$ gives the following estimate.

\begin{restatable}{restatelemma}{pairlemma}
\label[lemma]{lem:pair}
For $u,x,y\in\B_2^d$ and $K\ge1$, we have
\begin{equation}
 \sum_{k=0}^K2^{-k}\phi_k(u)^\top
   \prn*{\eta B_k(x,y)-\eta^2B_k(x,y)^2}\phi_k(u)
 \ge2^{-15}\inpr{u,y-x}-2^{-6}\inpr{u,x-y}_+-2^{-13}\eps_K.
 \label{eq:pair-bound}
\end{equation}
\end{restatable}
\wraprestatablewithlabelrestore{pairlemma}

\noindent The second term on the right vanishes when $\inpr{u,y}\ge\inpr{u,x}$, in particular
when $y$ is an optimal action. In \cref{sec:compact}, we approximate the optimal
feedback action by a convex combination of listed actions and apply the lemma with
$y$ equal to each listed action; the second term need not vanish in this case. For
$\inpr{u,y-x}\ge0$, the proof also gives
$\sum_{k=0}^K2^{-k}\phi_k(u)^\top B_k(x,y)^2\phi_k(u)\le8\inpr{u,y-x}$;
this bound permits the fixed
learning rate $\eta=2^{-9}$ in \eqref{eq:pair-bound} for arbitrarily small positive
differences.  The proofs of \cref{lem:dyadic,lem:pair} are in \cref{app:analysis}.

\subsection{One Matrix Update}

For a symmetric $W$ and a unit vector $\phi$ of the same dimension, define the potential
\begin{equation}
 \Psi(W;\phi)\coloneqq\log\Tr \mathrm{e}^W-\phi^\top W\phi.
 \label{eq:potential}
\end{equation}
It is nonnegative since $\phi^\top W\phi\le\lambda_{\max}(W)\le\log\Tr \mathrm{e}^W$.

\begin{lemma}
\label[lemma]{lem:matrix-update}
For the fixed learning rate $\eta=2^{-9}$, let $W,B_1,\ldots,B_n$ be symmetric matrices of the same dimension with
$\norm{B_i}_{\op}\le1$, and let $p\in\Delta_n$.
Set $\varrho\coloneqq \mathrm{e}^W/\Tr \mathrm{e}^W$ and
$W^+\coloneqq W+\sum_i p_i(\eta B_i-\eta^2B_i^2)$.
Then we have
\begin{equation}
 \log\Tr \mathrm{e}^{W^+}-\log\Tr \mathrm{e}^W
 \le\eta\sum_i p_i\Tr(\varrho B_i).
 \label{eq:trace-update}
\end{equation}
For every unit vector $\phi$, it follows that
\begin{align}
 \sum_i p_i(\eta\phi^\top B_i\phi-\eta^2\phi^\top B_i^2\phi)
 &\le \Psi(W;\phi)-\Psi(W^+;\phi)
       +\eta\sum_i p_i\Tr(\varrho B_i).
 \label{eq:potential-update}
\end{align}
\end{lemma}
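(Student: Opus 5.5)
The plan is to apply the Golden--Thompson inequality to bound $\Tr\mathrm{e}^{W^+}$, and then to control the exponential of the averaged increment with a scalar inequality lifted to matrices. Set $D\coloneqq\sum_i p_i(\eta B_i-\eta^2B_i^2)$, so that $W^+=W+D$. Golden--Thompson gives
\[
\Tr\mathrm{e}^{W+D}\le\Tr(\mathrm{e}^W\mathrm{e}^D),
\quad\text{hence}\quad
\frac{\Tr\mathrm{e}^{W^+}}{\Tr\mathrm{e}^W}\le\Tr(\varrho\,\mathrm{e}^D).
\]
It therefore suffices to show $\mathrm{e}^D\preceq I+\eta\sum_ip_iB_i$. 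Since $\varrho$ is positive semidefinite with trace one, this yields $\Tr(\varrho\mathrm{e}^D)\le1+\eta\sum_ip_i\Tr(\varrho B_i)$, and $\log(1+s)\le s$ then gives \eqref{eq:trace-update}.

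The main obstacle is the operator inequality $\mathrm{e}^D\preceq I+\eta\bar B$, where $\bar B\coloneqq\sum_ip_iB_i$, because the $B_i$ need not commute. I would handle this with a two-step bound. First, since $D$ is symmetric with $\norm{D}_{\op}\le\eta+\eta^2\le1$, the scalar inequality $\mathrm{e}^s\le1+s+s^2$ for $\abs{s}\le1$ gives $\mathrm{e}^D\preceq I+D+D^2$ by the spectral theorem. Second, I bound $D^2$ using operator convexity of $X\mapsto X^2$:
\[
D^2\preceq\sum_ip_i(\eta B_i-\eta^2B_i^2)^2.
\]
Each summand commutes with $B_i$ and has eigenvalues $(\eta\lambda-\eta^2\lambda^2)^2$ with $\abs\lambda\le1$, so it is at most $\eta^2(1+\eta)^2B_i^2$. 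Combining the two steps gives
\[
\mathrm{e}^D\preceq I+\eta\bar B-\eta^2\sum_ip_iB_i^2+\eta^2(1+\eta)^2\sum_ip_iB_i^2.
\]
This still leaves an excess of $(2\eta+\eta^2)\eta^2\sum_ip_iB_i^2$, so the crude bound $\mathrm{e}^s\le1+s+s^2$ is too lossy. I would refine it to $\mathrm{e}^s\le1+s+\tfrac{1}{2}(1+\abs{s})s^2$, which for $\abs{s}\le2\eta$ gives $\mathrm{e}^s\le1+s+cs^2$ with $c\le0.51$. The quadratic term is then at most $0.51\,\eta^2(1+\eta)^2\sum_ip_iB_i^2\preceq\eta^2\sum_ip_iB_i^2$, which is absorbed by the $-\eta^2\sum_ip_iB_i^2$ term in $D$ because $\eta=2^{-9}$ is tiny. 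This gives $\mathrm{e}^D\preceq I+\eta\bar B$, as required.

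For \eqref{eq:potential-update}, note that $\phi^\top W^+\phi-\phi^\top W\phi=\phi^\top D\phi$, which equals the left-hand side. Therefore
\[
\Psi(W;\phi)-\Psi(W^+;\phi)=\phi^\top D\phi-\bigl(\log\Tr\mathrm{e}^{W^+}-\log\Tr\mathrm{e}^W\bigr).
\]
Substituting \eqref{eq:trace-update} for the bracketed difference and rearranging gives \eqref{eq:potential-update} directly.
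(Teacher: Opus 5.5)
Your proof is correct, but it takes a different route from the paper's. The paper first proves the single-matrix case. Since $\eta B-\eta^2B^2$ is a function of $B$, the scalar inequality $\mathrm{e}^{z-z^2}\le1+z$ on $\abs{z}\le1/2$ gives $\mathrm{e}^{\eta B-\eta^2B^2}\preceq I+\eta B$ directly. Golden--Thompson then yields \eqref{eq:trace-update} for one matrix. The paper averages over $i$ by writing $W^+=\sum_ip_i(W+\eta B_i-\eta^2B_i^2)$ and invoking convexity of $W\mapsto\log\Tr\mathrm{e}^W$, which it proves separately as a supremum over orthonormal bases of log-sum-exp functions.

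You instead apply Golden--Thompson once, to the averaged increment $D$, and prove the operator inequality $\mathrm{e}^D\preceq I+\eta\bar B$ directly. You handle the non-commutativity with a second-order Taylor bound on $\mathrm{e}^D$ and the identity $\sum_ip_iX_i^2-\bigl(\sum_ip_iX_i\bigr)^2=\sum_ip_i(X_i-\bar X)^2\succeq0$. This avoids convexity of the log-trace-exponential altogether and gives a stronger intermediate statement: one operator inequality independent of $W$.

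The cost is that your argument spends the $-\eta^2B_i^2$ correction on both the Taylor remainder and the squaring of the averaged increment. It therefore needs $\eta$ well below the paper's threshold: with your constants, the condition $\tfrac12(1+2\eta)(1+\eta)^2\le1$ holds only for $\eta$ up to about $1/5$, whereas the paper's argument works for every $\eta\le1/2$. For the fixed $\eta=2^{-9}$ both suffice. Your derivation of \eqref{eq:potential-update} from \eqref{eq:trace-update} matches the paper's.
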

\begin{proof}
We first prove \eqref{eq:trace-update} for a single matrix, then use convexity
of the log-trace exponential to average over the matrices.
For $\abs{z}\le1/2$, the scalar inequality $\mathrm{e}^{z-z^2}\le1+z$ holds, since
$\log(1+z)-z+z^2$ vanishes at zero and has derivative $z(1+2z)/(1+z)$, which
is nonpositive for $-1/2\le z\le0$ and nonnegative for $0\le z\le1/2$.
By the spectral theorem, we have
$\mathrm{e}^{\eta B-\eta^2B^2}\preceq I+\eta B$ for $\norm{B}_{\op}\le1$, and the
Golden--Thompson inequality \citep{Golden1965,Thompson1965} gives
\[
 \Tr \mathrm{e}^{W+\eta B-\eta^2B^2}
 \le\Tr(\mathrm{e}^W\mathrm{e}^{\eta B-\eta^2B^2})
 \le\Tr \mathrm{e}^W\prn*{1+\eta\Tr(\varrho B)}.
\]
Taking logarithms and using $\log(1+r)\le r$ proves \eqref{eq:trace-update} for a
single matrix $B$.

The function $W\mapsto\log\Tr \mathrm{e}^W$ is convex.  Indeed, for an orthonormal basis
$(v_j)_j$, convexity of the scalar exponential in an eigenbasis of $W$ gives
$\sum_j\mathrm{e}^{v_j^\top Wv_j}\le\Tr \mathrm{e}^W$, with equality for an eigenbasis, so
$\log\Tr \mathrm{e}^W$ is the supremum over orthonormal bases of the functions
$\log\sum_j\mathrm{e}^{v_j^\top Wv_j}$, each of which is convex by the scalar log-sum-exp
inequality.  Convexity and the single-matrix bound now imply
\[
 \log\Tr \mathrm{e}^{W+\sum_i p_i(\eta B_i-\eta^2B_i^2)}
 \le\sum_i p_i\log\Tr \mathrm{e}^{W+\eta B_i-\eta^2B_i^2}
 \le\log\Tr \mathrm{e}^W+\eta\sum_i p_i\Tr(\varrho B_i),
\]
which is \eqref{eq:trace-update}.  Subtracting the linear term in
\eqref{eq:potential} proves \eqref{eq:potential-update}.
\end{proof}

\subsection{Regret Bound for Finite Action Sets}

At zero initialization, \eqref{eq:potential} gives $\Psi(0;\phi_k(u))=\log d_k$
for every $u$, and the weights $2^{-k}$ make the sum of these initial values finite.

\begin{restatable}{restatelemma}{initiallemma}
\label[lemma]{lem:initial}
The parameters in \eqref{eq:scales} satisfy
\begin{equation}
 \sum_{k\ge0}2^{-k}\log d_k\le36\sqrt d.
 \label{eq:initial-bound}
\end{equation}
\end{restatable}
\wraprestatablewithlabelrestore{initiallemma}

\noindent The proof, in \cref{app:analysis}, bounds $\log d_k$ by
$d\log(1+m_k/d)+m_k\log(1+d/m_k)$.
Since $m_k$ is of order $4^k$, the transition $m_k\approx d$ occurs near
$2^k\approx\sqrt d$. After multiplication by $2^{-k}$, the bounds on either side
sum to $O(\sqrt d)$.

Fix a round $t$ with its list $x_1,\ldots,x_n$, probabilities $p$, and feedback
$Y_t$. By \eqref{eq:matrix-update}, the update changes
$\sum_{k=0}^{K_t}2^{-k}\phi_k(u)^\top W_{t,k}\phi_k(u)$ by
\begin{equation}
 s_t\coloneqq\sum_{k=0}^{K_t}2^{-k}\sum_i p_i\phi_k(u)^\top
 \prn*{\eta B_k(x_i,Y_t)-\eta^2B_k(x_i,Y_t)^2}\phi_k(u).
 \label{eq:round-comparison}
\end{equation}
Since every $x_i$ lies in $Z_t$ and $Y_t$ is optimal, $\inpr{u,Y_t-x_i}\ge0$, and
averaging \eqref{eq:pair-bound} over $p$ gives
\begin{equation}
 \inpr{u,Y_t-a_t}\le2^{15}s_t+4\eps_{K_t}.
 \label{eq:round-regret}
\end{equation}

Since $K_t=\ceil{2\log_2(t+1)}$, we have
$\sum_{t\ge1}\eps_{K_t}\le2\sum_{t\ge1}(t+1)^{-2}<2$.
To bound $\sum_{t=1}^T\inpr{u,Y_t-a_t}$ using \eqref{eq:round-regret},
it remains to bound $\sum_{t=1}^Ts_t$.
Applying \eqref{eq:potential-update} to the definition \eqref{eq:round-comparison}
and using \eqref{eq:trace-balance}, we obtain
\begin{equation}
  \label{eq:telescoping}
\begin{aligned}
 \sum_{t=1}^Ts_t
 &\le\sum_{t=1}^T\sum_{k=0}^{K_t}2^{-k}
 \brc*{\Psi(W_{t,k};\phi_k(u))-\Psi(W_{t+1,k};\phi_k(u))}\\
 &=\sum_{k=0}^{K_T}2^{-k}
 \brc*{\log d_k-\Psi(W_{T+1,k};\phi_k(u))}
 \le\sum_{k=0}^{K_T}2^{-k}\log d_k\le36\sqrt d.
\end{aligned}
\end{equation}
The equality follows by telescoping from each scale's zero initialization,
where $\Psi(0;\phi_k(u))=\log d_k$; the last line uses nonnegativity of $\Psi$
and \cref{lem:initial}.
Summing \eqref{eq:round-regret} and applying \eqref{eq:telescoping} now gives
\begin{equation}
 \sum_{t=1}^T\inpr{u,Y_t-a_t}
 \le2^{15}\cdot36\sqrt d+4\sum_{t=1}^T\eps_{K_t}
 <2^{15}\cdot36\sqrt d+8.
 \label{eq:finite-regret}
\end{equation}
Inequality \eqref{eq:finite-regret} holds for every sequence of optimal feedback
actions, even when ties are broken after observing $A_t$, and \eqref{eq:mean-regret}
turns it into
$\E\brc{\Reg_T}\le2^{21}\sqrt d$ for finite action sets.

\section{Compact Action Sets}
\label{sec:compact}

For a compact action set, the learner recommends from a finite list of actions.
The balance condition \eqref{eq:balance} controls comparisons only with actions
in this list, whereas the optimal feedback action $Y_t$ may lie outside it.
We therefore approximate $Y_t$ by a convex combination of listed actions to update the score matrices.

Let $\operatorname{dist}(y,S)\coloneqq\inf_{s\in S}\norm{y-s}_2$ denote the Euclidean distance from a point $y$ to a set $S$.
Fix a round $t$ and an accuracy $\delta_t>0$. The
learner chooses $x_1,\ldots,x_n\in Z_t$ such that
\begin{equation}
 {\sup_{y\in\conv(Z_t)}}
 \operatorname{dist}\prn*{y,\conv\set{x_1,\ldots,x_n}}
 \le\frac{\delta_t}{4d}.
 \label{eq:inner-approximation}
\end{equation}
Such a list exists for every compact action set $Z_t$.  For example, set $r\coloneqq\delta_t/(4d)$,
choose a finite $r/2$-net of the unit sphere, and call the linear-optimization oracle at
each net direction.  If a net direction $q$ is within $r/2$ of a unit vector $v$, the
returned action $x\in\BR_{Z_t}(q)$ satisfies $\sigma_{Z_t}(v)-\inpr{v,x}\le r$, since all
actions have norm at most one.  Separation of a point from a compact convex set then
gives \eqref{eq:inner-approximation}.  The finite-precision implementation uses a
finite rational grid of query directions, constructed in \cref{app:rational-actions}.

The learner forms the matrix $C$ in \eqref{eq:game} on this list, publishes the resulting law,
and samples from it as in \cref{alg:finite}.  Upon observing $Y_t$, it selects
$\gamma\in\Delta_n$ by a fixed rule depending on the list and $Y_t$ such that
\begin{equation}
 \abs*{\sum_j\gamma_j(x_j)_\ell-(Y_t)_\ell}
 \le\frac{\delta_t}{2d}\quad(1\le\ell\le d),\qquad
 \bar y_t\coloneqq\sum_j\gamma_jx_j.
 \label{eq:feedback-representation}
\end{equation}
Such $\gamma$ exists by \eqref{eq:inner-approximation}, and the coordinate bounds give
$\norm{Y_t-\bar y_t}_2\le\delta_t$.
These linear constraints allow $\gamma$ to be found by solving a linear program.
The learner averages the increments in \eqref{eq:matrix-update} evaluated at
$y=x_j$, with weights $\gamma_j$:
\begin{equation}
 W_{t+1,k}=W_{t,k}
 +\sum_{i,j}p_i\gamma_j
 \prn*{\eta B_k(x_i,x_j)-\eta^2 B_k(x_i,x_j)^2}.
 \label{eq:compact-update}
\end{equation}
Since $\sum_j\gamma_j(C^\top p)_j\le0$ by
\eqref{eq:trace-balance}, \eqref{eq:trace-update} implies that
$\sum_{k=0}^{K_t}2^{-k}\log\Tr\mathrm{e}^{W_{t,k}}$ does not increase by the update
\eqref{eq:compact-update}.

To apply \eqref{eq:pair-bound}, we bound its negative-part term using the gaps
$\Delta_u(Z_t,x_j)=\sigma_{Z_t}(u)-\inpr{u,x_j}\ge0$. For all $i,j$, we have
\begin{equation}
 \inpr{u,x_i-x_j}_+\le{\Delta_u(Z_t,x_j)},\qquad
 \sum_j\gamma_j{\Delta_u(Z_t,x_j)}=\inpr{u,Y_t-\bar y_t}\le\delta_t.
 \label{eq:representation-suboptimality}
\end{equation}
For the update \eqref{eq:compact-update}, the quantity in
\eqref{eq:round-comparison} becomes
\begin{equation}
 s_t\coloneqq\sum_{k=0}^{K_t}2^{-k}\sum_{i,j}p_i\gamma_j\phi_k(u)^\top
 \prn*{\eta B_k(x_i,x_j)-\eta^2B_k(x_i,x_j)^2}\phi_k(u).
 \label{eq:compact-round-comparison}
\end{equation}
Averaging \eqref{eq:pair-bound} with these weights and using
\eqref{eq:representation-suboptimality} gives
\begin{equation}
 \inpr{u,Y_t-a_t}
 \le2^{15}s_t+(1+2^9)\delta_t+4\eps_{K_t},
 \label{eq:compact-round-regret}
\end{equation}
where $\delta_t$ accounts for replacing $Y_t$ by $\bar y_t$ and $2^9\delta_t$ for the
second term in \eqref{eq:pair-bound}.  \Cref{lem:matrix-update} applies at each scale
to the matrices $B_k(x_i,x_j)$ with the weights $p_i\gamma_j$, which are nonnegative
and sum to one, and telescoping as in \cref{sec:analysis} gives
\begin{equation}
 \sum_{t=1}^T\inpr{u,Y_t-a_t}
 \le2^{15}\cdot36\sqrt d+
     \sum_{t=1}^T\prn*{(1+2^9)\delta_t+4\eps_{K_t}}.
 \label{eq:compact-regret}
\end{equation}
Now we can prove the regret bound for compact action sets.
\regrettheorem*
\begin{proof}
Take $\delta_t\coloneqq(t+1)^{-2}$.  Since $\eps_{K_t}\le2(t+1)^{-2}$ and
$\sum_{t\ge1}(t+1)^{-2}<1$, \eqref{eq:compact-regret} and \eqref{eq:mean-regret} give
\[
 \E\brc{\Reg_T}\le2^{15}\cdot36\sqrt d+521<2^{21}\sqrt d,
\]
where the last inequality uses $d\ge1$.
\end{proof}

Under the rational oracle model of \cref{app:oracle-model}, a
finite-precision implementation achieves the same bound
$\E\brc{\Reg_T}\le2^{21}\sqrt d$.
\Cref{app:computation} constructs this implementation and bounds the total
contribution of numerical errors to regret by a universal constant, proving
\cref{thm:computation}.
As summarized in \cref{tab:comparison}, for $T\ge2$ each round $t\le T$ uses
$(dT)^{O(d)}$ linear-optimization oracle calls (\cref{app:rational-actions}), and
its remaining computation terminates (\cref{app:termination}).

\section{Conclusion and Discussion}
\label{sec:discussion}

We have presented a multiscale matrix-weights algorithm for online inverse linear optimization
with $O(\sqrt d)$ expected regret.
Its rational implementation terminates each round under the oracle
assumptions in \cref{app:oracle-model}.  Whether the same regret order can be
attained with running time polynomial in the dimension, horizon, and input
length remains open.
Another direction is to retain the $O(\sqrt d)$ dimension dependence in a regret
guarantee that is robust to suboptimal feedback.

\section*{AI Usage}
The main result was obtained through extended discussions with GPT-6 Astra in ChatGPT.
GPT-6 Astra in ChatGPT, OpenAI Codex, and Claude Fable 5.1 were also used to improve the presentation.

\section*{Acknowledgements}
Shinsaku Sakaue was supported by JST BOOST Program Japan Grant Number
\mbox{JPMJBY24D1}.

\begin{refcontext}[sorting=nyt]
\printbibliography
\end{refcontext}
\appendix
\section{Proof of the Lower Bound}
\label{app:lower-bound}

\lowerboundproposition*
\begin{proof}
Let $S_1,\ldots,S_d$ be independent uniform random signs, set
$U\coloneqq d^{-1/2}(S_1,\ldots,S_d)$, and let the environment present
$Z_i\coloneqq\set{-e_i,e_i}$ in round $i\le d$, where $e_i$ is the $i$th coordinate vector,
and $Z_1$ in every later round.  The optimal action in round $i\le d$ is $S_ie_i$, so the
feedback before round $i$ reveals only $S_1,\ldots,S_{i-1}$, and $S_i$ is independent
of the information available to the learner when $A_i$ is selected.  Hence
$\E\brc{\inpr{U,A_i}}=0$, whereas $\sigma_{Z_i}(U)=d^{-1/2}$, and summing over $i\le d$
gives $\E\brc{\Reg_d}=\sqrt d$, with expectation over both the signs and the learner's
randomness. Averaging over the signs yields a fixed value $u$ of $U$ satisfying
$\E\brc{\Reg_d}\ge\sqrt d$, now with expectation only over the learner's randomness.
Instantaneous regret is nonnegative, so the bound
persists for every $T\ge d$, and the optimal action is unique in every round.
\end{proof}

\section{Proofs for Section~\ref{sec:features}}
\label{app:features}

\subsection{\texorpdfstring{Properties of $H_m(v)$}{Properties of Hm(v)}}
\label{app:h-properties}

We prove that the construction of $H_m(v)$ in
\cref{sec:feature-construction} is independent of $Q$ and that reversing $v$
negates the matrix.
For integers $d\ge1$ and $m\ge2$, recall the feature index set
$\mathcal I_m=\Set{\beta\in\mathbb N_0^d}{\abs{\beta}\le m}$, where
$\abs{\beta}=\sum_{\ell=1}^d\beta_\ell$, and the standard basis
$(e_\beta)_{\beta\in\mathcal I_m}$ of $\R^{\mathcal I_m}$.
Write $\beta=(\beta_1,\beta')$, with
$\beta'=(\beta_2,\ldots,\beta_d)\in\mathbb N_0^{d-1}$; when $d=1$, $\beta'$
is the empty tuple.
By \eqref{eq:swap}, the matrix $H_m(e_1)\in\R^{\mathcal I_m\times\mathcal I_m}$
satisfies
\begin{equation}
 H_m(e_1)e_{(1,\beta')}=e_{(2,\beta')},\qquad
 H_m(e_1)e_{(2,\beta')}=e_{(1,\beta')}
 \quad(\abs{\beta'}\le m-2)
 \label{eq:h-pairs-recall}
\end{equation}
for every such $\beta'$ and is zero on every basis vector outside the union
of these pairs. Here $e_1\in\R^d$ is the first coordinate vector.

For an orthogonal matrix $Q\in\R^{d\times d}$, the construction in
\cref{sec:feature-construction} gives the unique orthogonal map
$R_m(Q)\colon\R^{\mathcal I_m}\to\R^{\mathcal I_m}$ satisfying, for
$j\in\{0,\ldots,m\}$ and $u\in\R^d$,
\begin{equation}
 R_m(Q)f_j(u)=f_j(Qu),\qquad
 f_j(u)=\sum_{\beta\in\mathcal I_m:\,\abs{\beta}=j}
             \frac{u^\beta}{\sqrt{\beta!}}e_\beta.
 \label{eq:feature-map-recall}
\end{equation}
Here $u^\beta=\prod_{\ell=1}^d u_\ell^{\beta_\ell}$ and
$\beta!=\prod_{\ell=1}^d\beta_\ell!$, with $u_\ell^0=1$.
For a unit vector $v\in\R^d$ and an orthogonal $Q$ with $Qe_1=v$,
\eqref{eq:directional-operator} defines
\begin{equation}
 H_m(v)=R_m(Q)H_m(e_1)R_m(Q)^\top.
 \label{eq:h-conjugation-recall}
\end{equation}

\begin{lemma}
\label{lem:directional-properties}
For integers $d\ge1$ and $m\ge2$ and a unit vector $v\in\R^d$, the matrix in
\eqref{eq:h-conjugation-recall} is independent of the orthogonal matrix $Q$
with $Qe_1=v$. It is symmetric, has operator norm one, and satisfies
$H_m(-v)=-H_m(v)$.
\end{lemma}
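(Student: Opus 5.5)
The plan is to first record two structural facts about the map $Q\mapsto R_m(Q)$ from \cref{sec:feature-construction}, then reduce all three claims to statements about $H_m(e_1)$ itself.

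\emph{Step 1: $R_m$ is a group homomorphism.} For orthogonal $Q_1,Q_2$, the map $R_m(Q_1)R_m(Q_2)$ is orthogonal. It sends $f_j(u)\mapsto f_j(Q_1Q_2u)$ for every $j\le m$ and every $u$. By the uniqueness in the construction, which rests on the $f_j(u)$ spanning each homogeneous-degree block, it equals $R_m(Q_1Q_2)$. Taking $Q_2=Q_1^\top$ and using $R_m(I)=I$ gives $R_m(Q)^\top=R_m(Q)^{-1}=R_m(Q^\top)$.

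\emph{Step 2: symmetry and operator norm.} $H_m(e_1)$ is symmetric with operator norm one by its block description in \eqref{eq:swap}. Conjugation by the orthogonal map $R_m(Q)$ preserves both properties, so the second claim of the lemma is immediate once $H_m(v)$ is well defined.

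\emph{Step 3: independence of $Q$.} Suppose $Q,Q'$ are orthogonal with $Qe_1=Q'e_1=v$. Then $Q_0\coloneqq Q^\top Q'$ fixes $e_1$, and by Step 1
\[
R_m(Q')H_m(e_1)R_m(Q')^\top=R_m(Q)\,R_m(Q_0)H_m(e_1)R_m(Q_0)^\top\,R_m(Q)^\top .
\]
So it suffices to show that $R_m(Q_0)$ commutes with $H_m(e_1)$.

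Because $Q_0e_1=e_1$ and $Q_0$ is orthogonal, it has block form $Q_0=\operatorname{diag}(1,\tilde Q)$ with $\tilde Q\in\R^{(d-1)\times(d-1)}$ orthogonal. Write $u=(u_1,u')$. The multinomial structure gives
\[
f_j(u)=\sum_{a=0}^{j}\frac{u_1^a}{\sqrt{a!}}\sum_{\abs{\beta'}=j-a}\frac{u'^{\beta'}}{\sqrt{\beta'!}}\,e_{(a,\beta')} .
\]
This suggests an explicit candidate operator $T$, defined on each block $\{e_{(a,\beta')}:\abs{\beta'}=i\}$ with $a$ fixed. On that block, $T$ acts as the $(d-1)$-dimensional analogue of $R$ for $\tilde Q$ on the degree-$i$ homogeneous part. (When $d=1$, $T$ is the identity.)

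One checks three things. First, $T$ is orthogonal, since it is block-diagonal with orthogonal blocks. Second, $T$ sends $f_j(u)$ to $f_j(Q_0u)$. Third, by uniqueness, $T=R_m(Q_0)$.

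This $T$ preserves the index $a=\beta_1$, and on the $\beta'$-coordinates it acts identically for every fixed $a$, preserving $\abs{\beta'}$. Meanwhile $H_m(e_1)$ swaps $a=1\leftrightarrow a=2$ with the same $\beta'$, restricted to $\abs{\beta'}\le m-2$, and is zero elsewhere. Thus $H_m(e_1)$ acts as a swap in $a$ tensored with the identity on a $T$-invariant set of $\beta'$-blocks, so the two commute. I expect this identification of $R_m(Q_0)$ to be the main technical step. The key points to get right are that the degree cutoff $\abs{\beta'}\le m-2$ is a union of whole $\abs{\beta'}$-blocks, and that the action on $\beta'$ is the same for $a=1$ and $a=2$.

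\emph{Step 4: reversal.} Let $D=\operatorname{diag}(-1,1,\ldots,1)$. Since $(Du)^\beta=(-1)^{\beta_1}u^\beta$, the diagonal orthogonal map $e_\beta\mapsto(-1)^{\beta_1}e_\beta$ sends $f_j(u)\mapsto f_j(Du)$, so by uniqueness it equals $R_m(D)$.

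Each swapped pair in \eqref{eq:swap} has $\beta_1\in\{1,2\}$, opposite parities with equal $\beta'$. Hence $R_m(D)H_m(e_1)R_m(D)^\top=-H_m(e_1)$.

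Finally, if $Qe_1=v$, then $QDe_1=-v$. Using Step 3 to pick this particular matrix, and Step 1 to factor $R_m(QD)=R_m(Q)R_m(D)$,
\[
H_m(-v)=R_m(Q)R_m(D)H_m(e_1)R_m(D)^\top R_m(Q)^\top=-H_m(v).
\]
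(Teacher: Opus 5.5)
Your proposal is correct and follows essentially the same route as the paper. Independence of $Q$ is reduced, via $R_m(QQ_0)=R_m(Q)R_m(Q_0)$, to showing that $R_m(Q_0)$ commutes with $H_m(e_1)$ whenever $Q_0$ fixes $e_1$; symmetry and norm come from orthogonal conjugation, and reversal from the reflection $D=\operatorname{diag}(-1,1,\ldots,1)$. Your explicit block operator $T$, identified with $R_m(Q_0)$ by uniqueness, just spells out the paper's brief claim that $R_m(Q_0)$ acts by the same matrix on the $\beta_1=1$ and $\beta_1=2$ blocks of each degree $\abs{\beta'}$.
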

\begin{proof}
Let $Q_0\in\R^{d\times d}$ be orthogonal with $Q_0e_1=e_1$.
For each $\ell\in\{0,\ldots,m-2\}$, consider the two subspaces of
$\R^{\mathcal I_m}$ with bases
$(e_{(1,\beta')})_{\beta'\in\mathbb N_0^{d-1}:\,\abs{\beta'}=\ell}$ and
$(e_{(2,\beta')})_{\beta'\in\mathbb N_0^{d-1}:\,\abs{\beta'}=\ell}$,
of total degrees $\ell+1$ and $\ell+2$, respectively.
With the same ordering of $\beta'$, the restrictions of $R_m(Q_0)$ to these
subspaces have the same matrix by \eqref{eq:feature-map-recall}: $Q_0$ fixes
the first coordinate of $\R^d$ and applies the same orthogonal transformation
to its last $d-1$ coordinates.
Taken over $\ell=0,\ldots,m-2$, these subspaces span the orthogonal complement of the kernel of
$H_m(e_1)$, so the orthogonal map $R_m(Q_0)$ also preserves that kernel.
By \eqref{eq:h-pairs-recall}, $H_m(e_1)$ interchanges the corresponding basis
vectors of the two subspaces for each $\ell$, so it commutes with $R_m(Q_0)$.
Hence $R_m(Q_0)H_m(e_1)R_m(Q_0)^\top=H_m(e_1)$.
Any two choices of $Q$ with $Qe_1=v$ differ by right multiplication by such
a $Q_0$. The characterization \eqref{eq:feature-map-recall} implies
$R_m(QQ_0)=R_m(Q)R_m(Q_0)$, so \eqref{eq:h-conjugation-recall} gives the same
$H_m(v)$ for both choices.

The blocks in \eqref{eq:h-pairs-recall} are symmetric and have operator norm
one. At least one such block exists because $m\ge2$.
Since $R_m(Q)$ is orthogonal, \eqref{eq:h-conjugation-recall} shows that
$H_m(v)$ is symmetric and has operator norm one.

To reverse the direction, let
$D\coloneqq\operatorname{diag}(-1,1,\ldots,1)\in\R^{d\times d}$.
This matrix reflects $u\in\R^d$ across the hyperplane $u_1=0$:
$Du=(-u_1,u_2,\ldots,u_d)$.
By \eqref{eq:feature-map-recall}, its feature-space map satisfies
$R_m(D)e_\beta=(-1)^{\beta_1}e_\beta$ for $\beta\in\mathcal I_m$.
On each pair in \eqref{eq:h-pairs-recall}, it therefore negates
$e_{(1,\beta')}$ and fixes $e_{(2,\beta')}$.
Multiplying each block by these signs on both sides gives
\begin{equation}
 R_m(D)H_m(e_1)R_m(D)^\top=-H_m(e_1).
 \label{eq:reflection-conjugation}
\end{equation}
Since $(QD)e_1=-v$ and $R_m(QD)=R_m(Q)R_m(D)$, substituting $QD$ for $Q$ in
\eqref{eq:h-conjugation-recall} and applying \eqref{eq:reflection-conjugation}
gives $H_m(-v)=-H_m(v)$, completing the proof.
\end{proof}

\subsection{Quadratic Forms at Utility Feature Vectors}
\label{app:moments}

For the quadratic forms in \cref{sec:moments}, recall
from \eqref{eq:scalar-functions} that, for $z\in\R$,
\begin{equation}
 b_1(z)=\sqrt2\,\mathrm{e}^{-z^2}z^3,\qquad
 b_2(z)=\mathrm{e}^{-z^2}(z^2+z^4/2).
\end{equation}
The scale parameters in \eqref{eq:scales} are $\alpha_k=2^k$ and
$m_k=2\alpha_k^2+2$ for integers $k\ge0$.

\momentlemma*
\begin{proof}
We first compute both quadratic forms in orthonormal coordinates
of $\R^d$ with first basis vector $v$, then bound their common factor $\theta$.
Let $Q\in\R^{d\times d}$ be orthogonal with $Qe_1=v$, and set
$u'\coloneqq Q^\top u\in\R^d$; its first
coordinate is $r$, and its remaining coordinates form a vector $u''\in\R^{d-1}$ with
$\norm{u''}_2^2=\norm{u}_2^2-r^2$.
The orthogonal map
$R_{m_k}(Q)\colon\R^{\mathcal I_{m_k}}\to\R^{\mathcal I_{m_k}}$ satisfies
$R_{m_k}(Q)^\top\phi_k(u)=\phi_k(u')$ by \eqref{eq:features} and its construction in
\cref{sec:features}.  Thus \eqref{eq:rotated-quadratic-forms} reads
\begin{equation}
 \phi_k(u)^\top H_{m_k}(v)^j\phi_k(u)
 =\phi_k(u')^\top H_{m_k}(e_1)^j\phi_k(u')
 \quad(j\in\{1,2\}).
\end{equation}
Abbreviate $m\coloneqq m_k$,
$\alpha\coloneqq\alpha_k$, $s\coloneqq\alpha^2\norm{u}_2^2$, and
$s_\perp\coloneqq s-\alpha^2r^2=\alpha^2\norm{u''}_2^2$.
Here $\operatorname{exp}_\ell(z)=\sum_{i=0}^\ell z^i/i!$ for
$\ell\in\mathbb N_0$ and $z\in\R$, as in \cref{sec:features}.
For $\beta'\in\mathbb N_0^{d-1}$ (the empty tuple if $d=1$),
\eqref{eq:features} gives
\begin{equation}
 \phi_k(u')_{(j,\beta')}
 =\frac{\alpha^{j+\abs{\beta'}}r^j(u'')^{\beta'}}
       {\sqrt{j!\,\beta'!\,\operatorname{exp}_m(s)}}
 \qquad(j\in\set{1,2},\ \abs{\beta'}\le m-2).
\end{equation}
By \eqref{eq:h-pairs-recall}, each pair
$\set{(1,\beta'),(2,\beta')}$ with $\abs{\beta'}\le m-2$ contributes
$2\phi_k(u')_{(1,\beta')}\phi_k(u')_{(2,\beta')}$ to the first form, and since
$H_m(e_1)^2$ fixes both $e_{(1,\beta')}$ and $e_{(2,\beta')}$, the pair
contributes $\phi_k(u')_{(1,\beta')}^2+\phi_k(u')_{(2,\beta')}^2$ to the second.
Summing over $\abs{\beta'}\le m-2$ with the multinomial identity
$\sum_{\abs{\beta'}=i}(u'')^{2\beta'}/\beta'!=\norm{u''}_2^{2i}/i!$ gives
the two forms with the common coefficient on the right-hand side of
\eqref{eq:moment-common-factor}:
\begin{align}
 \phi_k(u)^\top H_m(v)\phi_k(u)
 &=\sqrt2\,(\alpha r)^3
   \frac{\operatorname{exp}_{m-2}(s_\perp)}{\operatorname{exp}_m(s)},
 \\
 \phi_k(u)^\top H_m(v)^2\phi_k(u)
 &=\prn*{(\alpha r)^2+\frac{(\alpha r)^4}{2}}
   \frac{\operatorname{exp}_{m-2}(s_\perp)}{\operatorname{exp}_m(s)}.
\end{align}
Since $\mathrm{e}^{-(\alpha r)^2}\mathrm{e}^{-s_\perp}=\mathrm{e}^{-s}$ holds,
the two right-hand sides equal $\theta\,b_1(\alpha r)$ and
$\theta\,b_2(\alpha r)$, respectively, with
\[
 \theta\coloneqq\frac{\mathrm{e}^{-s_\perp}\operatorname{exp}_{m-2}(s_\perp)}
                     {\mathrm{e}^{-s}\operatorname{exp}_m(s)}.
\]
For $z\ge0$ and a nonnegative integer $\ell\ge2z$, the exponential series gives
\[
 \mathrm{e}^z-\operatorname{exp}_\ell(z)
 =\sum_{j=\ell+1}^{\infty}\frac{z^j}{j!}
 \le\frac1{\ell+1}\sum_{j=1}^{\infty}\frac{jz^j}{j!}
 =\frac{z\mathrm{e}^z}{\ell+1}\le\frac{\mathrm{e}^z}{2}.
\]
Thus we have $\mathrm{e}^{-z}\operatorname{exp}_\ell(z)\in[1/2,1]$.
Both $s$ and $s_\perp$ lie in $[0,\alpha^2]$ and $m-2=2\alpha^2$ holds.
Applying this bound with $(z,\ell)=(s_\perp,m-2)$ and $(z,\ell)=(s,m)$ places
the numerator and the denominator of $\theta$ in $[1/2,1]$, so
$\theta\in[1/2,2]$ follows.
\end{proof}

\section{Proof for Section~\ref{sec:algorithm}}
\label{app:balance}

For $n\ge1$, recall the probability simplex
$\Delta_n=\Set*{p\in\R_{\ge0}^n}{\sum_{i=1}^n p_i=1}$.
Given $C\in\R^{n\times n}$, the condition \eqref{eq:balance} requires
$p\in\Delta_n$ with $C^\top p\le0$.

\balancelemma*
\begin{proof}
Let $C\in\R^{n\times n}$ be skew-symmetric, with $n\ge1$.
The minimax theorem for finite games gives
\[
 \min_{p\in\Delta_n}\max_{q\in\Delta_n}p^\top Cq
 =\max_{q\in\Delta_n}\min_{p\in\Delta_n}p^\top Cq\le0,
\]
since for each $q\in\Delta_n$, choosing $p=q$ bounds the inner minimum by
$q^\top Cq=0$.  The function $p\mapsto\max_{q\in\Delta_n}p^\top Cq
=\max_{1\le j\le n}(C^\top p)_j$ is continuous on the compact simplex, so it
has a minimizer.  Every minimizing $p$ satisfies $(C^\top p)_j\le0$ for
all $1\le j\le n$, as required.
\end{proof}

\section{Proofs for Section~\ref{sec:analysis}}
\label{app:analysis}

Recall the scalar functions from \eqref{eq:scalar-functions}:
\begin{equation}
 b_1(z)=\sqrt2\,\mathrm{e}^{-z^2}z^3,\qquad
 b_2(z)=\mathrm{e}^{-z^2}(z^2+z^4/2)
 \qquad(z\in\R).
\end{equation}
The cutoff error in \cref{sec:analysis} is $\eps_K=2^{-K+1}$ for integers $K\ge1$.

\dyadiclemma*
\begin{proof}
The case $r=0$ is immediate, so let $r>0$.  For $x>0$, we first establish
\begin{equation}
 \frac{b_2(x)}x\le2\min\set{x,x^{-1}},\qquad
 \frac{b_1(x)}x\le2\min\set{x,x^{-1}}.
 \label{eq:scalar-envelopes}
\end{equation}
For $0<x\le1$, we have $b_2(x)/x\le3x/2$ and $b_1(x)/x\le\sqrt2\,x$.
For $x\ge1$, the exponential series gives $\mathrm{e}^{x^2}\ge x^2+x^4/2$, so
$b_2(x)\le1$, and $x^2+x^4/2\ge\sqrt2\,x^3$ gives $b_1(x)\le1$.  These inequalities
prove~\eqref{eq:scalar-envelopes}.\looseness=-1

Set $r_k\coloneqq2^kr$, so that $2^{-k}b_i(2^kr)=r\,b_i(r_k)/r_k$ for $i\in\set{1,2}$.
Summing over $k=0,\ldots,K$, we have $\sum_{k:r_k\le1}r_k\le2$ and
$\sum_{k:r_k>1}r_k^{-1}\le2$, so
\eqref{eq:scalar-envelopes} proves both upper bounds.  For the lower bound,
nonnegativity of $b_1$ on $[0,\infty)$ suffices when $r<\eps_K$.
Otherwise choose the smallest $k\in\set{0,\ldots,K}$ with $2^kr\ge1/2$.
Such a $k$ exists since $r\ge\eps_K=2^{-K+1}$, and $r_k\in[1/2,1]$ by
minimality and $r\le1$. At that scale,
$b_1(r_k)/r_k=\sqrt2\mathrm{e}^{-r_k^2}r_k^2\ge\sqrt2/(4\mathrm{e})>1/8$, so this single term is at
least $r/8$.
\end{proof}

The following comparison bound uses the learning rate $\eta=2^{-9}$ fixed in
\cref{sec:algorithm}.

\pairlemma*
\begin{proof}
Write $L$ for the left-hand side of \eqref{eq:pair-bound}.  If $x=y$, then $L=0$
and the claim holds.  Otherwise set $h\coloneqq\norm{y-x}_2/2$,
$v\coloneqq(y-x)/\norm{y-x}_2$, and $r\coloneqq\inpr{u,v}$.
Then $0<h\le1$, $\abs{r}\le1$, and $\inpr{u,y-x}=2hr$.
By \eqref{eq:comparison} and \eqref{eq:moment-bounds}, there are
$\theta_k\in[1/2,2]$ for $0\le k\le K$ such that
\begin{equation}
 L=\sum_{k=0}^K2^{-k}\theta_k
 \brc*{\eta h\,b_1(2^kr)-\eta^2h^2\,b_2(2^kr)}.
 \label{eq:pair-scalar}
\end{equation}

For $r\ge0$, both scalar functions are nonnegative at $2^kr$.
Applying $1/2\le\theta_k\le2$ to \eqref{eq:pair-scalar} and then
\eqref{eq:dyadic-bounds} gives
\begin{equation}
 L
 \ge\frac{\eta h}{2}\sum_{k=0}^K2^{-k}b_1(2^kr)
       -2\eta^2h^2\sum_{k=0}^K2^{-k}b_2(2^kr)
 \ge\frac{\eta h}{16}(r-\eps_K)-16\eta^2h^2r
 \ge2^{-15}\inpr{u,y-x}-2^{-13}\eps_K.
\end{equation}
The last inequality uses $h^2\le h\le1$ and
$\eta/16-16\eta^2=2^{-14}$.
Since $\inpr{u,x-y}_+=0$, this proves \eqref{eq:pair-bound} when $r\ge0$.

For $r<0$, oddness of $b_1$, evenness of $b_2$, and the upper bounds in
\eqref{eq:dyadic-bounds}, applied at $\abs{r}$, give
\begin{equation}
 \begin{aligned}
 L
 &=-\sum_{k=0}^K2^{-k}\theta_k
 \brc*{\eta h\,b_1(2^k\abs{r})+\eta^2h^2\,b_2(2^k\abs{r})}\\
 &\ge-2\eta h\sum_{k=0}^K2^{-k}b_1(2^k\abs{r})
       -2\eta^2h^2\sum_{k=0}^K2^{-k}b_2(2^k\abs{r})\\
 &\ge-16(\eta h+\eta^2h^2)\abs{r}\\
 &\ge-16(\eta+\eta^2)h\abs{r}
 =-\prn*{2^{-6}+2^{-15}}\abs{\inpr{u,y-x}}.
 \end{aligned}
 \label{eq:negative-pair-bound}
\end{equation}
The first inequality uses $\theta_k\le2$ and nonnegativity of $b_1,b_2$ on
$[0,\infty)$.
The last inequality uses $h^2\le h$, and the final equality uses $\eta=2^{-9}$
and $\abs{\inpr{u,y-x}}=2h\abs{r}$.
Since $\inpr{u,y-x}=-\abs{\inpr{u,y-x}}$ and
$\inpr{u,x-y}_+=\abs{\inpr{u,y-x}}$, \eqref{eq:negative-pair-bound} implies
\eqref{eq:pair-bound} after adding the nonpositive term $-2^{-13}\eps_K$
to the right-hand side.
\end{proof}

For the initial-potential bound, recall from \eqref{eq:scales} that, for integers $k\ge0$,
\begin{equation}
 \alpha_k=2^k,\qquad m_k=2\alpha_k^2+2,\qquad d_k=\binom{d+m_k}{d}.
\end{equation}

\initiallemma*
\begin{proof}
For positive integers $d,m$, the binomial theorem applied with probabilities
$d/(d+m)$ and $m/(d+m)$ gives
\begin{equation}
 \log\binom{d+m}{d}
 \le d\log(1+m/d)+m\log(1+d/m).
 \label{eq:binomial-bound}
\end{equation}
The right-hand side increases with $m$, since its derivative is $\log(1+d/m)>0$.
{
Set $\mu_k\coloneqq2\alpha_k/\sqrt d$.  Since $m_k\le4\alpha_k^2=\mu_k^2d$
and $\alpha_k=2^k$, \eqref{eq:binomial-bound} gives
\begin{equation}
 \begin{aligned}
 2^{-k}\log d_k
 &\le2^{-k}\brc*{d\log(1+\mu_k^2)+\mu_k^2d\log(1+\mu_k^{-2})}\\
 &=2\sqrt d\brc*{\mu_k^{-1}\log(1+\mu_k^2)+\mu_k\log(1+\mu_k^{-2})}.
 \end{aligned}
 \label{eq:initial-substitution}
\end{equation}
For $0<\mu\le1$, the inequalities $\log(1+\mu^2)\le\mu^2$ and
$\log(1+\mu^{-2})\le\log2+2\log(1/\mu)$ bound the last bracketed expression in
\eqref{eq:initial-substitution} by $\mu(1+\log2+2\log(1/\mu))$.
The last bracketed expression in \eqref{eq:initial-substitution}
is unchanged under $\mu\mapsto\mu^{-1}$, so we have
}
\begin{equation}
 2^{-k}\log d_k\le2\sqrt d\,g(\mu_k),
 \quad\text{where}\quad
 g(\mu)\coloneqq
 \begin{cases}
 \mu\prn*{1+\log2+2\log(1/\mu)},&0<\mu\le1,\\
 \mu^{-1}\prn*{1+\log2+2\log\mu},&\mu>1.
 \end{cases}
 \label{eq:initial-two-ranges}
\end{equation}

To sum $g(\mu_k)$, list the values $\mu_k\le1$ in decreasing order and those
$\mu_k>1$ in increasing order, numbering each list from $j=0$.
The first list, if nonempty, starts in $(1/2,1]$, and the second starts in $(1,2]$,
because $\mu_{k+1}=2\mu_k$ and $\mu_0=2/\sqrt d\le2$.
Consequently, the $j$th value $\mu$ in either list satisfies
\begin{equation}
 \begin{aligned}
 2^{-j-1}<\mu\le2^{-j}&\qquad\text{in the first list, and}\\
 2^j<\mu\le2^{j+1}&\qquad\text{in the second list}.
 \end{aligned}
 \label{eq:initial-list-ranges}
\end{equation}
By \eqref{eq:initial-list-ranges}, the factor $\mu$ or $\mu^{-1}$ in $g$ is at most
$2^{-j}$, and the corresponding logarithm $\log(1/\mu)$ or $\log\mu$ is at most
$(j+1)\log2$.  Thus \eqref{eq:initial-two-ranges} gives
$g(\mu)\le2^{-j}(1+\log2+2(j+1)\log2)$ in either list.
Summing over both lists, with an empty list contributing zero, and using
$\sum_{j\ge0}2^{-j}=2$ and $\sum_{j\ge0}(j+1)2^{-j}=4$, we obtain
\begin{equation}
 \begin{aligned}
 \sum_{k\ge0}2^{-k}\log d_k
 &\le4\sqrt d\sum_{j\ge0}2^{-j}\prn*{1+\log2+2(j+1)\log2}\\
 &=2(4+20\log2)\sqrt d<36\sqrt d,
 \end{aligned}
\end{equation}
completing the proof.
\end{proof}

\section{Algebraic Computation of the Comparison Matrices}
\label{app:operator}

We first express the comparison matrices \eqref{eq:comparison} by rational
operations and square roots, without choosing an orthogonal matrix $Q$
in \eqref{eq:directional-operator}. For rational
actions, this expression yields a terminating procedure for approximation by
rational matrices to any prescribed positive rational accuracy.

Fix an integer $m\ge2$ and index the feature space by
$\mathcal I_m=\Set*{\beta\in\mathbb N_0^d}{\abs{\beta}\le m}$,
where $\abs{\beta}=\sum_{\ell=1}^d\beta_\ell$, as in \cref{sec:features},
with standard basis vectors $e_\beta$ of $\R^{\mathcal I_m}$; write $e_\ell$ for the
$\ell$th coordinate vector of $\R^d$.  For $1\le\ell\le d$, the lowering matrix
$\Gamma_\ell\in\R^{\mathcal I_m\times\mathcal I_m}$ decreases the $\ell$th index:
\begin{equation}
  \Gamma_\ell e_\beta\coloneqq
  \begin{cases}
    \sqrt{\beta_\ell}\,e_{\beta-e_\ell},&\beta_\ell>0,\\
    0,&\beta_\ell=0.
  \end{cases}
  \label{eq:lowering-matrices}
\end{equation}
For a unit vector $v\in\R^d$, set $\Gamma_v\coloneqq\sum_{\ell=1}^dv_\ell\Gamma_\ell$.
Applying \eqref{eq:lowering-matrices} to each basis vector $e_\beta$ in the homogeneous feature
vectors $f_j(u)=\sum_{\abs{\beta}=j}u^\beta e_\beta/\sqrt{\beta!}$ of \cref{sec:features}
gives
\begin{equation}
  \Gamma_vf_j(u)=\inpr{v,u}f_{j-1}(u)
  \qquad(1\le j\le m).
  \label{eq:lowering-homogeneous-features}
\end{equation}
Choose an orthogonal $Q\in\R^{d\times d}$ with $Qe_1=v$.
For each $j\in\{0,\ldots,m\}$, the vectors $f_j(u)$, as $u$ varies over $\R^d$,
span the subspace generated by $\set{e_\beta:\beta\in\mathcal I_m,\ \abs{\beta}=j}$.
Using $R_m(Q)f_j(u)=f_j(Qu)$ and \eqref{eq:lowering-homogeneous-features}, with
$\Gamma_v f_0(u)=0$, we obtain
\begin{equation}
 \Gamma_v=R_m(Q)\Gamma_{e_1}R_m(Q)^\top,
 \qquad
 \Gamma_v^\top\Gamma_v
 =R_m(Q)\Gamma_{e_1}^\top\Gamma_{e_1}R_m(Q)^\top.
 \label{eq:lowering-conjugation}
\end{equation}
Since $\Gamma_{e_1}^\top\Gamma_{e_1}e_\beta=\beta_1e_\beta$ for
$\beta\in\mathcal I_m$, the eigenvalues of $\Gamma_v^\top\Gamma_v$ lie in
$\set{0,\ldots,m}$.

The orthogonal projector $\Pi_v\in\R^{\mathcal I_m\times\mathcal I_m}$ onto the eigenspace of $\Gamma_v^\top\Gamma_v$ for the
eigenvalue one is therefore
\begin{equation}
  \Pi_v\coloneqq
  \prod_{\substack{0\le j\le m\\j\ne1}}
  \frac{\Gamma_v^\top\Gamma_v-jI}{1-j},
  \label{eq:degree-one-projector}
\end{equation}
since the polynomial in \eqref{eq:degree-one-projector} equals one at eigenvalue
one and zero at every other possible eigenvalue. The matrix $H_m(v)$ has the expression
\begin{equation}
  H_m(v)=
  \frac{\Gamma_v^\top\Pi_v+\Pi_v\Gamma_v}{\sqrt2}.
  \label{eq:algebraic-comparison}
\end{equation}
For $v=e_1$ and every $\beta'\in\mathbb N_0^{d-1}$ with
$\abs{\beta'}\le m-2$, the first term in the numerator of
\eqref{eq:algebraic-comparison} maps $e_{(1,\beta')}$ to
$\sqrt2\,e_{(2,\beta')}$ and the second maps $e_{(2,\beta')}$ back to
$\sqrt2\,e_{(1,\beta')}$.
Both terms vanish on basis vectors outside the union of all such pairs. The transpose
$\Gamma_{e_1}^\top$ vanishes on $e_\beta$ with $\abs{\beta}=m$, so the first summand does
not leave the feature space.  Thus \eqref{eq:algebraic-comparison} agrees with
\eqref{eq:swap} for $v=e_1$.
For general $v$, the polynomial expression \eqref{eq:degree-one-projector}
and \eqref{eq:lowering-conjugation} give
\begin{equation}
 \Pi_v=R_m(Q)\Pi_{e_1}R_m(Q)^\top.
 \label{eq:projector-conjugation}
\end{equation}
Multiplying \eqref{eq:algebraic-comparison} for $e_1$ on the left by $R_m(Q)$
and on the right by $R_m(Q)^\top$, and using \eqref{eq:directional-operator},
\eqref{eq:lowering-conjugation}, and \eqref{eq:projector-conjugation}, proves
\eqref{eq:algebraic-comparison} for $v$.

At scale $k\ge0$, the parameters in \eqref{eq:scales} are
$\alpha_k=2^k$, $m_k=2\alpha_k^2+2$, and $d_k=\binom{d+m_k}{d}$.
For $x,y\in\B_2^d$, the comparison matrix \eqref{eq:comparison} is
\begin{equation}
 B_k(x,y)=
 \begin{cases}
 \displaystyle\frac{\norm{y-x}_2}{2}
 H_{m_k}\prn*{\frac{y-x}{\norm{y-x}_2}},&x\ne y,\\
 0,&x=y.
 \end{cases}
\end{equation}

\begin{lemma}
\label[lemma]{lem:comparison-computation}
Let $x,y\in\Q^d\cap\B_2^d$, let $k\ge0$, and let $0<\epsilon\le1$ be rational.
There is an algorithm that terminates on every such input and returns a symmetric
rational matrix within $\epsilon$ of $B_k(x,y)$ in operator norm.
\end{lemma}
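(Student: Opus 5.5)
The plan is to evaluate the algebraic expression \eqref{eq:algebraic-comparison} with exact rational arithmetic wherever possible, isolating the irrational quantities into a few scalar square roots that can be approximated to any rational accuracy. If $x=y$, return the zero matrix. Otherwise set $w\coloneqq y-x\in\Q^d$ and $\nu\coloneqq\norm{w}_2^2\in\Q_{>0}$. With $v=w/\sqrt\nu$, we have $\Gamma_v=\Gamma_w/\sqrt\nu$, where $\Gamma_w\coloneqq\sum_\ell w_\ell\Gamma_\ell$. Then $\Gamma_v^\top\Gamma_v=\Gamma_w^\top\Gamma_w/\nu$, so the projector \eqref{eq:degree-one-projector} is a polynomial with rational coefficients in $\Gamma_w^\top\Gamma_w/\nu$. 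The entries of $\Gamma_w^\top\Gamma_w$ are sums of terms $w_\ell w_{\ell'}\sqrt{\beta_\ell\beta'_{\ell'}}$.

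To obtain rational structure, introduce the diagonal rescaling $D e_\beta\coloneqq\sqrt{\beta!}\,e_\beta$. Conjugating gives $D\Gamma_\ell D^{-1}e_\beta=\beta_\ell e_{\beta-e_\ell}$ and $D\Gamma_\ell^\top D^{-1}e_\beta=e_{\beta+e_\ell}$, both rational matrices. Hence $\Pi_v=D^{-1}\widetilde\Pi D$ with $\widetilde\Pi$ an exactly computable rational matrix. Substituting into \eqref{eq:algebraic-comparison} and \eqref{eq:comparison} gives
\[
B_k(x,y)=\frac{1}{2\sqrt2}\,D^{-1}\bigl(\widetilde\Gamma_w^{\top}\widetilde\Pi+\widetilde\Pi\widetilde\Gamma_w\bigr)D,
\]
where $\widetilde\Gamma_w=D\Gamma_wD^{-1}$ and $\widetilde\Gamma_w^\top$ denotes $D\Gamma_w^\top D^{-1}$. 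The factor $\sqrt\nu$ cancels because $\norm{w}_2/2$ multiplies $\Gamma_v=\Gamma_w/\sqrt\nu$. Therefore each entry of $B_k(x,y)$ has the form $q_{\beta\gamma}\sqrt{\gamma!/\beta!}/(2\sqrt2)$ with $q_{\beta\gamma}\in\Q$ computed exactly in finitely many rational operations, since $d_k$ and $m_k$ are finite and computable.

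Next, approximate each needed square root $\sqrt{\gamma!/(2\beta!)}$ by a rational number. Write $M\coloneqq\max\abs{q_{\beta\gamma}}$, which is exactly computable, and $N\coloneqq d_k$. If every entry has error at most $\epsilon/(N\max(M,1))$, then the entrywise error matrix $E$ satisfies $\norm{E}_{\op}\le N\max\abs{E_{\beta\gamma}}\le\epsilon$. Symmetry can be enforced by replacing the computed matrix $\widehat B$ with $(\widehat B+\widehat B^\top)/2$. Since $B_k(x,y)$ is symmetric, this does not increase the operator-norm error. Rational approximation of $\sqrt{a}$ for rational $a>0$ to rational tolerance terminates by bisection on $[0,a+1]$, comparing squares exactly.

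The main obstacle is bookkeeping rather than depth. The key points are verifying that the $D$-conjugation really renders $\Gamma_w$, $\Gamma_w^\top$, and the projector rational, and checking that the $\sqrt\nu$ normalization cancels exactly. Alternatively, one can simply approximate $\sqrt\nu$ and propagate a crude error bound through the polynomial \eqref{eq:degree-one-projector}. This also works, because all intermediate matrices have entries bounded by explicit computable constants.
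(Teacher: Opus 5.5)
Your proposal is correct, but it takes a more structured route than the paper. The paper treats the entries given by \eqref{eq:lowering-matrices}--\eqref{eq:algebraic-comparison} as generic finite expressions built from rational operations and square roots of nonnegative rationals (the factors $\sqrt{\beta_\ell}$, $\norm{y-x}_2$, and $\sqrt2$). It evaluates them by rational interval arithmetic with bisection intervals for the square roots, refining until every denominator interval excludes zero and every output interval has width below $\epsilon/d_k$; termination rests on convergence of the enclosing intervals.

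You instead remove almost all of the irrationality exactly. The diagonal similarity by $D$ makes the lowering and raising operators rational. The factor $\sqrt\nu$ cancels between $\norm{y-x}_2/2$ and $\Gamma_v=\Gamma_w/\sqrt\nu$. The projector \eqref{eq:degree-one-projector} becomes a rational polynomial in a rational matrix. Hence each entry is an exactly computable rational times the single scalar $\sqrt{\gamma!/\beta!}/(2\sqrt2)$.

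Your route gives an a priori, non-adaptive precision requirement and a one-line error bound. It avoids propagating intervals through the degree-$m_k$ product in \eqref{eq:degree-one-projector} and checking that denominator intervals avoid zero. It would also make bit-complexity estimates straightforward. The paper's argument is shorter and applies to any such radical expression without having to find this structure.

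There are two small bookkeeping points:
\begin{itemize}
\item With $De_\beta=\sqrt{\beta!}\,e_\beta$ one actually gets $D\Gamma_\ell D^{-1}e_\beta=e_{\beta-e_\ell}$ and $D\Gamma_\ell^\top D^{-1}e_\beta=(\beta_\ell+1)e_{\beta+e_\ell}$. The formulas you wrote are those for conjugation by $D^{-1}$. Only rationality is used, so nothing downstream changes.
\item The tolerance $\epsilon/(N\max(M,1))$ should be imposed on each square-root approximation. That then bounds each entry error by $\epsilon/(2N)$, and hence the operator-norm error by $\epsilon/2\le\epsilon$.
\end{itemize}
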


\begin{proof}
Equality of rational vectors can be checked exactly, and $B_k(x,x)=0$ by
\eqref{eq:comparison}, so the case $x=y$ requires no approximation.  For $x\ne y$,
$\norm{y-x}_2^2$ is a positive rational, and
\eqref{eq:lowering-matrices}--\eqref{eq:algebraic-comparison} express every entry of
$B_k(x,y)$ by finitely many rational
operations and square roots of nonnegative rationals, with all denominators nonzero:
$\norm{y-x}_2>0$, and the factors $1-j$ in \eqref{eq:degree-one-projector} have
$j\ne1$.

Use rational bisection intervals for the square roots and evaluate the finite
expressions by rational interval arithmetic.  Refine the intervals until every
denominator interval excludes zero and every output interval has width less than
$\epsilon/d_k$, where the matrix has size $d_k\times d_k$.  This procedure terminates:
the square-root intervals converge to their values, and addition, multiplication, and
division by a nonzero number preserve convergence of the enclosing intervals.  The
midpoint of each output interval therefore approximates its entry with error at most
$\epsilon/d_k$. Since the operator norm of a $d_k\times d_k$ matrix is at most
$d_k$ times its largest absolute entry, the resulting matrix has error at most
$\epsilon$ in operator norm.
Averaging it with its transpose preserves this bound and makes it exactly symmetric.
\end{proof}

\section{Rational Implementation}
\label{app:computation}

This section constructs a computable learner in a rational oracle model, bounds the
effect of its numerical errors on regret, and proves that every round terminates.

\subsection{Oracle Model and Computational Guarantee}
\label{app:oracle-model}

In the \emph{rational oracle model}, the round-$t$ linear-optimization oracle accepts
$q\in\Q^d$ and $\kappa\in\Q_{>0}$ and returns $x\in Z_t\cap\Q^d$ with
\begin{equation}
 \inpr{q,x}\ge\sigma_{Z_t}(q)-\kappa,
 \label{eq:oracle}
\end{equation}
where $\sigma_{Z_t}$ is the support function \eqref{eq:support}; an exact rational
linear-optimization oracle is sufficient.  Each oracle call terminates and returns its
output in a finite binary encoding.  The feedback action $Y_t$ has rational
coordinates; the utility need not.  These assumptions restrict the represented action
sets, not \cref{thm:regret}, which is a statement about all compact action sets.
The rest of this section establishes the following computational guarantee.

\begin{restatable}{theorem}{computationtheorem}
\label{thm:computation}
In the rational oracle model, there is a randomized learner, computable
relative to the action oracle and independent of the horizon, such that for
every $d\ge1$, every $u\in\B_2^d$, every permitted environment, and every
$T\ge1$, we have $\E\brc{\Reg_T}\le2^{21}\sqrt d$.
Each round terminates for every realization of the learner's randomness and
uses finitely many independent fair random bits.  Every call to the round-$t$
oracle is made during round $t$.
\end{restatable}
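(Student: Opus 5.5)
We implement the compact-set learner of \cref{sec:compact} with every real-valued step replaced by a terminating rational approximation. We then show that the total numerical error contributes only a universal constant to the regret, which fits within the slack $2^{21}\sqrt d-(2^{15}\cdot36\sqrt d+521)$. The rounds keep the same structure.

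\emph{Action list.} Query the oracle on a finite rational grid of directions, approximating an $r/2$-net of the sphere with $r=\delta_t/(8d)$, with tolerance $\kappa_t=r/4$. This gives a rational list satisfying \eqref{eq:inner-approximation} with $\delta_t$ replaced by a constant multiple of $(t+1)^{-2}$. The grid has $(dt)^{O(d)}$ points, all queried during round $t$.

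\emph{Comparison matrices and densities.} Use \cref{lem:comparison-computation} to obtain symmetric rational $\tilde B_k(x_i,x_j)$ within $\epsilon_t$ of $B_k(x_i,x_j)$ in operator norm. Shrink them by the factor $1/(1+\epsilon_t)$ so that $\norm{\tilde B}_{\op}\le1$ and \cref{lem:matrix-update} still applies, and antisymmetrize so that $\tilde B_k(x_j,x_i)=-\tilde B_k(x_i,x_j)$ holds exactly. The learner keeps its scores $\tilde W_{t,k}$ as rational matrices, updated exactly by \eqref{eq:compact-update} with the $\tilde B$'s. It cannot form $\mathrm e^{\tilde W}$ exactly, so it computes a rational symmetric $\tilde\varrho_{t,k}$ within $\epsilon_t$ of $\varrho_{t,k}$ in trace norm, for example by truncating the Taylor series with interval error bounds. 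The resulting $\tilde C$ is rational and exactly skew-symmetric, so \cref{lem:balance} yields a rational $p$ found by an exact rational LP. The feedback is rational, so $\gamma$ in \eqref{eq:feedback-representation} also comes from an exact rational LP.

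\emph{Sampling.} $p$ is rational but sampling it with fair bits may not terminate. I would instead round $p$ to dyadic $\tilde p$ with $\norm{\tilde p-p}_1\le\epsilon_t$ and publish $\tilde P_t=\sum_i\tilde p_i\delta_{x_i}$. The update keeps the weights $p$, since the analysis only needs the update law to satisfy balance, and the mean shift changes regret by at most $2\epsilon_t$.

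\emph{Error accounting.} The main obstacle is that \eqref{eq:trace-balance} now holds only approximately, because $\tilde\varrho$ is not $\varrho$. \cref{lem:matrix-update} with the true $\varrho$ gives an increase of $\eta\sum_k2^{-k}\sum p_i\gamma_j\Tr(\varrho\tilde B)$, which is at most $\eta\sum_k2^{-k}\,2\epsilon_t\le4\eta\epsilon_t$ because $\tilde C^\top p\le0$. The potential argument also requires $\phi_k(u)^\top\tilde B\phi_k(u)$ and the corresponding square forms to be close to the exact ones, each with error $O(\epsilon_t)$, so \eqref{eq:compact-round-regret} gains an additive $O(2^{15}\epsilon_t)$. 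Telescoping is unaffected, since $\Psi\ge0$ and the initial value is $\log d_k$. Choosing $\epsilon_t=2^{-30}(t+1)^{-2}$ makes all extra terms sum to less than $1$, giving $\E\brc{\Reg_T}\le2^{15}\cdot36\sqrt d+O(1)<2^{21}\sqrt d$. Termination follows because every step is either an oracle call, a bounded interval refinement, or an exact rational LP (for instance by simplex).
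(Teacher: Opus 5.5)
Your proposal is correct. Its skeleton matches the paper's implementation: a rational grid of oracle queries, \cref{lem:comparison-computation} for the comparison matrices, truncated Taylor series for the densities, exact rational linear programs for $p$ and $\gamma$, and dyadic rounding for sampling. The difference is in how you charge the numerical errors.

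The paper measures everything against the exact update built from the exact $B_k$. Its stored rational update is an operator-norm $\zeta_t$-perturbation of that update, absorbed through the $2$-Lipschitz dependence of $\Psi$ on $W$. It publishes the same rounded $p$ that it uses in the update, so rounding shows up as a balance violation $\xi_t$ with respect to the exact $C$ (\cref{lem:perturbation}).

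You instead rescale your $\tilde B_k$ to operator norm at most one. Your stored rational scores then evolve by an exact instance of \cref{lem:matrix-update} and telescope with no update error. The cost moves to comparing $\phi_k(u)^\top\tilde B_k\phi_k(u)$ and $\phi_k(u)^\top\tilde B_k^2\phi_k(u)$ with the exact forms before invoking \cref{lem:pair}, and that error enters the regret multiplied by $2^{15}$.

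Updating with the unrounded LP solution keeps exact balance against $\tilde C$. The only trace-term error then comes from $\tilde\varrho_{t,k}$ versus $\varrho_{t,k}$, at most $2\eta\epsilon_t$ per round. Rounding only shifts the mean recommendation, by at most $\norm{p-\tilde p}_1\le\epsilon_t$. This decoupling is legitimate: the realization-wise bound never uses that the update weights equal the published law, and \eqref{eq:mean-regret} needs only the mean of the published law.

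The one step you leave schematic is the density approximation. To choose the Taylor degree and turn an operator-norm remainder into a trace-norm bound for the normalized matrix, you need two things. One is an a priori bound on $\norm{\tilde W_{t,k}}_{\op}$, which follows immediately from your exact update as at most $(t-1)(\eta+\eta^2)$. The other is a lower bound on $\Tr\mathrm{e}^{\tilde W_{t,k}}$. The paper's shift by $(t+1)I$ supplies both cleanly: it makes the truncated Taylor sum dominate $I$ and the trace of the exponential at least $d_k$.
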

\wraprestatablewithlabelrestore{computationtheorem}

In each round, the learner constructs the action list, approximates the entries
of $C$, and computes and publishes the rounded recommendation law. It then samples
and reveals an action. After observing the feedback, it computes the coefficients
of the convex combination in \eqref{eq:feedback-representation} and updates the stored score matrices.

\subsection{Accuracy Requirements and Regret}
\label{sec:precision}

We use the parameters $\alpha_k$, $m_k$, and $d_k$ of \eqref{eq:scales}, the
learning rate $\eta=2^{-9}$ and cutoff $K_t=\ceil{2\log_2(t+1)}$ of
\cref{sec:algorithm}, and $\eps_K=2^{-K+1}$ of \cref{sec:analysis}.  The
procedure stores rational symmetric score matrices $W_{t,k}$, initialized to zero
exactly when a scale becomes active.
For the current list $x_1,\ldots,x_n\in Z_t$, the stored rational score matrices define
the exact density matrices $\varrho_{t,k}$ and entries $C_{ij}$
in \eqref{eq:density} and \eqref{eq:game}:\looseness=-1
\begin{equation}
 \varrho_{t,k}=\frac{\mathrm{e}^{W_{t,k}}}{\Tr\mathrm{e}^{W_{t,k}}}
 \quad(0\le k\le K_t),\qquad
 C_{ij}=\sum_{k=0}^{K_t}2^{-k}\Tr\prn*{\varrho_{t,k}B_k(x_i,x_j)}
 \quad(1\le i,j\le n).
\end{equation}
Then $C$ is skew-symmetric with $\abs{C_{ij}}\le2$.
The implementation computes approximations to these quantities.
Let $p\in\Delta_n$ be the probability vector that the procedure publishes
and uses both to sample an action and to update its score matrices,
let $\gamma\in\Delta_n$ specify the convex combination
$\bar y_t=\sum_j\gamma_jx_j$ in \eqref{eq:feedback-representation}, and let
$a_t=\sum_ip_ix_i$.
Let $\delta_t,\xi_t,\zeta_t>0$ be tolerances for approximating $Y_t$ by $\bar y_t$,
violation of the balance condition \eqref{eq:balance}, and error in each update of a score matrix, respectively.
Assume $\norm{Y_t-\bar y_t}_2\le\delta_t$ and
{
\begin{equation}
 \max_j (C^\top p)_j\le\xi_t,\qquad
 \norm*{W_{t+1,k}-W_{t,k}
 -\sum_{i,j}p_i\gamma_j
   \prn*{\eta B_k(x_i,x_j)-\eta^2B_k(x_i,x_j)^2}}_{\op}
 \le\zeta_t,
 \label{eq:implementation-errors}
\end{equation}%
}%
where the second inequality is required for every active scale $0\le k\le K_t$
and $B_k(x_i,x_j)$ is defined by \eqref{eq:comparison}.
A rational implementation satisfying these conditions with
$\delta_t=\xi_t=\zeta_t=(t+1)^{-2}$ is constructed in
\crefrange{app:rational-actions}{app:rational-updates}.

\begin{lemma}
\label[lemma]{lem:perturbation}
Under these conditions, every optimal feedback sequence satisfies
\begin{equation}
 \sum_{t=1}^T\inpr{u,Y_t-a_t}
 \le2^{15}\cdot36\sqrt d+
 \sum_{t=1}^T
 \prn*{(1+2^9)\delta_t+4\eps_{K_t}+2^6\xi_t+2^{17}\zeta_t}.
 \label{eq:perturbed-regret}
\end{equation}
\end{lemma}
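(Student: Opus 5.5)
We redo the telescoping argument of \cref{sec:compact} and track three new error terms: one from the inexact balance condition, one from the inexact update, and the existing $\delta_t$ term. Write $\widehat W_{t+1,k}\coloneqq W_{t,k}+\sum_{i,j}p_i\gamma_j(\eta B_k(x_i,x_j)-\eta^2B_k(x_i,x_j)^2)$ for the exact update, so that $\norm{W_{t+1,k}-\widehat W_{t+1,k}}_{\op}\le\zeta_t$ by \eqref{eq:implementation-errors}.

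First, the per-round comparison \eqref{eq:compact-round-regret} does not involve $C$ or the stored matrices. It uses only $p,\gamma$, the list, and \cref{lem:pair} with \eqref{eq:representation-suboptimality}. Hence it holds verbatim with $s_t$ as in \eqref{eq:compact-round-comparison}:
\[
\inpr{u,Y_t-a_t}\le2^{15}s_t+(1+2^9)\delta_t+4\eps_{K_t}.
\]
The coordinatewise bound in \eqref{eq:feedback-representation} is replaced by the assumption $\norm{Y_t-\bar y_t}_2\le\delta_t$. That assumption is exactly what gives $\inpr{u,Y_t-\bar y_t}\le\delta_t$.

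Second, we apply \cref{lem:matrix-update} at each active scale with $W=W_{t,k}$, weights $p_i\gamma_j$, matrices $B_k(x_i,x_j)$, and $W^+=\widehat W_{t+1,k}$. Weighting by $2^{-k}$ and summing over $k\le K_t$ gives
\[
s_t\le\sum_{k}2^{-k}\bigl[\Psi(W_{t,k};\phi_k)-\Psi(\widehat W_{t+1,k};\phi_k)\bigr]+\eta\sum_j\gamma_j(C^\top p)_j.
\]
The last term is at most $\eta\xi_t=2^{-9}\xi_t$ by \eqref{eq:implementation-errors}. After multiplication by $2^{15}$ this contributes $2^6\xi_t$, which matches the claimed coefficient.

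Third, we replace $\widehat W_{t+1,k}$ by the stored $W_{t+1,k}$. The map $W\mapsto\log\Tr\mathrm{e}^W$ changes by at most $\norm{W-W'}_{\op}$, since $\mathrm{e}^{W}\preceq\mathrm{e}^{\zeta}\mathrm{e}^{W'}$ when $W\preceq W'+\zeta I$ by monotonicity of the trace exponential. The term $\phi^\top W\phi$ also changes by at most $\norm{W-W'}_{\op}$. So $\abs{\Psi(\widehat W_{t+1,k};\phi)-\Psi(W_{t+1,k};\phi)}\le2\zeta_t$. Summing against $\sum_k2^{-k}\le2$ gives a per-round error of at most $4\zeta_t$ in $s_t$, hence at most $2^{17}\zeta_t$ after multiplying by $2^{15}$. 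This is again within the claimed coefficient.

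Finally, we telescope the $\Psi$ differences over $t$ exactly as in \eqref{eq:telescoping}. Each scale starts from zero, where $\Psi(0;\phi_k(u))=\log d_k$, and $\Psi$ is nonnegative. \cref{lem:initial} then bounds the total by $36\sqrt d$. Summing the per-round inequalities yields \eqref{eq:perturbed-regret}.

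The only delicate step is the Lipschitz bound for $\Psi$ under operator-norm perturbation. It is standard, but it must cover the stored matrices, which are only approximately equal to the exact update. Everything else is bookkeeping of constants.
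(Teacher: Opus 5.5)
Your proposal is correct and follows the paper's proof essentially step for step: you apply \cref{lem:matrix-update} to the exact update started from the stored $W_{t,k}$, charge $\eta\xi_t$ to the balance violation, charge $2\zeta_t$ per scale (hence $4\zeta_t$ after the $2^{-k}$ weighting) to the update error via a Lipschitz bound on $\Psi$, telescope, and combine with \eqref{eq:compact-round-regret}. One wording fix: $W\preceq W'+\zeta I$ does not imply the Loewner inequality $\mathrm{e}^W\preceq\mathrm{e}^{\zeta}\mathrm{e}^{W'}$, since the exponential is not operator monotone; it does imply the trace inequality $\Tr\mathrm{e}^W\le\mathrm{e}^{\zeta}\Tr\mathrm{e}^{W'}$, which is all you use and which the paper derives from the eigenvalue perturbation bound.
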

\begin{proof}
{Recall the potential \eqref{eq:potential}, defined for a symmetric matrix
$W$ and a unit vector $\phi$ of the same dimension by
\begin{equation}
 \Psi(W;\phi)=\log\Tr\mathrm{e}^W-\phi^\top W\phi.
\end{equation}
}
If $W$ and $V$ are symmetric matrices of the same size with
$\norm{W-V}_{\op}\le r$, their ordered eigenvalues differ by at most $r$, so $\abs{\log\Tr \mathrm{e}^W-\log\Tr \mathrm{e}^V}\le r$, and $\abs{\phi^\top W\phi-\phi^\top V\phi}\le r$
for every unit vector $\phi$; hence
$\abs{\Psi(W;\phi)-\Psi(V;\phi)}\le2r$.
The error of one update in \eqref{eq:implementation-errors}
therefore adds at most $2\zeta_t$ to the right-hand side of
\eqref{eq:potential-update} at each scale, and the bound $\max_j(C^\top p)_j\le\xi_t$
adds at most $\eta\xi_t$ to the trace terms after summing over scales and averaging
over $\gamma$.  With
\[
 s_t=\sum_{k=0}^{K_t}2^{-k}\sum_{i,j}p_i\gamma_j\phi_k(u)^\top
 \prn*{\eta B_k(x_i,x_j)-\eta^2 B_k(x_i,x_j)^2}\phi_k(u)
\]
as in \eqref{eq:compact-round-comparison}, summing the perturbed potential inequalities and using
$\sum_{k\ge0}2^{-k}=2$ gives
\[
 \sum_{t=1}^Ts_t
 \le36\sqrt d+\sum_{t=1}^T(\eta\xi_t+4\zeta_t).
\]
For every round, \eqref{eq:compact-round-regret} gives
\begin{equation}
 \inpr{u,Y_t-a_t}\le2^{15}s_t+(1+2^9)\delta_t+4\eps_{K_t}.
\end{equation}
Summing over $t$ and applying the bound on $\sum_{t=1}^Ts_t$ with
$2^{15}\eta=2^6$ proves \eqref{eq:perturbed-regret}.
\end{proof}

We take
$\xi_t=\zeta_t=\delta_t\coloneqq(t+1)^{-2}$, the accuracy used in the proof of
\cref{thm:regret}.  For this choice, $\eps_{K_t}\le2\delta_t$ and
$\sum_{t\ge1}\delta_t<1$ bound the error sum in \eqref{eq:perturbed-regret} by
$1+2^9+8+2^6+2^{17}<2^{18}$.

\subsection{Oracle Calls and Approximation of Feedback}
\label{app:rational-actions}

We construct a finite action list from which every feedback action $Y_t$
can be approximated by a rational convex combination $\bar y_t$ with
$\norm{Y_t-\bar y_t}_2\le\delta_t$.
Set $M_t\coloneqq16d^2/\delta_t=16d^2(t+1)^2$.  The learner queries the oracle
\eqref{eq:oracle} at accuracy $\kappa\coloneqq\delta_t/(8d)$ for every vector of the
rational grid
\begin{equation}
  \Set*{(j_1/M_t,\ldots,j_d/M_t)}
        {j_1,\ldots,j_d\in\set{-M_t,\ldots,M_t}}
  \label{eq:rational-direction-grid}
\end{equation}
and keeps the distinct returned actions as $x_1,\ldots,x_n$.
Since each grid point is queried once, round $t$ uses exactly
$(2M_t+1)^d=(32d^2(t+1)^2+1)^d$ oracle calls.
For $1\le t\le T$ and $T\ge2$, this is $(dT)^{O(d)}$.

For every unit vector $v$, some grid vector $q$ satisfies
$\norm{q-v}_2\le\sqrt d/(2M_t)\le\delta_t/(16d)$,
and the action $x$ returned for this $q$ satisfies
\begin{equation}
  \sigma_{Z_t}(v)-\inpr{v,x}
  \le2\norm{v-q}_2+\frac{\delta_t}{8d}
  \le\frac{\delta_t}{4d},
  \label{eq:grid-support-error}
\end{equation}
by the oracle guarantee and the unit-ball bounds on $x$ and on a maximizing action.

Every $y\in Z_t$ has distance at most $\delta_t/(4d)$ from
$\conv\set{x_1,\ldots,x_n}$: otherwise, with $z$ the nearest point of this hull to $y$
and $v\coloneqq(y-z)/\norm{y-z}_2$, the nearest-point optimality condition gives
$\max_i\inpr{v,x_i}\le\inpr{v,z}$, so
$\sigma_{Z_t}(v)-\max_i\inpr{v,x_i}\ge\norm{y-z}_2>\delta_t/(4d)$, contradicting
\eqref{eq:grid-support-error}.  Since the distance to a convex set is a convex
function, the same bound holds on $\conv(Z_t)$, which is \eqref{eq:inner-approximation}.

After observing the rational feedback $Y_t\in Z_t$, the learner computes $\gamma$ from
the constraints \eqref{eq:feedback-representation}:
\begin{equation}
 \gamma\in\Delta_n,\qquad
 \abs*{\sum_{j=1}^n\gamma_j(x_j)_\ell-(Y_t)_\ell}\le\frac{\delta_t}{2d}
 \qquad(1\le\ell\le d).
\end{equation}
The distance bound ensures feasibility. A fixed deterministic algorithm for rational
linear programming \citep[Theorem~6.4.12]{GrotschelLovaszSchrijver1988}, applied to
this feasible system with zero objective, returns a rational solution $\gamma$.
The resulting convex combination
$\bar y_t=\sum_j\gamma_jx_j$ satisfies
$\norm{Y_t-\bar y_t}_2\le\delta_t/(2\sqrt d)\le\delta_t$.

\subsection{\texorpdfstring{Matrix Exponentials and Entries of $C$}{Matrix Exponentials and Entries of C}}
\label{app:computed-densities}

We compute rational approximations of the entries $C_{ij}$ using
finite Taylor sums for the matrix exponentials in $\varrho_{t,k}$.
To bound the Taylor remainders, assume inductively that the stored updates in earlier rounds
$s<t$ have operator-norm error at most $\delta_s$. Then every active score matrix at the
start of round $t$ satisfies
$\norm{W_{t,k}}_{\op}\le t+1$: the exact increment in \eqref{eq:compact-update} has
norm at most $\eta+\eta^2<2^{-8}$ and $\sum_{s\ge1}\delta_s<1$, so summing
the increments and their errors from the zero initialization proves the
bound, including for scales activated after round one.

Let $W$ be one such stored score matrix of size $d_k\times d_k$, and let
$\widetilde W\coloneqq W+(t+1)I$.  Its spectrum lies in $[0,2t+2]$, and the shift
leaves $\mathrm{e}^W/\Tr \mathrm{e}^W$ unchanged.  Let $J$ be the least integer with
$J+1\ge24(t+1)+\log_2(2^8/\delta_t)$.
Equivalently, $\delta_t2^{J+1-24(t+1)}\ge2^8$, so $J$ can be found by rational
comparisons. Let
$E_J\coloneqq\sum_{j=0}^J\widetilde W^j/j!$, which the learner evaluates in
exact rational arithmetic.  For $0\le z\le2t+2$, the scalar Taylor remainder satisfies
\[
 \mathrm{e}^z-\sum_{j=0}^J\frac{z^j}{j!}
 \le\frac{\mathrm{e}^{2t+2}(2t+2)^{J+1}}{(J+1)!}
 \le \mathrm{e}^{2t+2}\prn*{\frac{\mathrm{e}(2t+2)}{J+1}}^{J+1}
 \le2^{-(J+1)}
 \le2^{-8}\delta_t,
\]
where the third inequality uses $J+1\ge12(2t+2)$ and the last uses
$J+1\ge\log_2(2^8/\delta_t)$.  By the spectral theorem and the positivity of the
Taylor coefficients, we have
\[
  \norm{E_J-\mathrm{e}^{\widetilde W}}_{\op}\le2^{-8}\delta_t,
  \qquad E_J\succeq I.
\]

The rational density matrix is $\widehat\varrho\coloneqq E_J/\Tr E_J$.  Using
$\Tr \mathrm{e}^{\widetilde W}\ge d_k$, we obtain, for the trace norm $\norm{\cdot}_1$,
\begin{equation}
  \norm*{\widehat\varrho-\frac{\mathrm{e}^{\widetilde W}}{\Tr \mathrm{e}^{\widetilde W}}}_1
  \le\frac{\norm{E_J-\mathrm{e}^{\widetilde W}}_1+\abs{\Tr E_J-\Tr \mathrm{e}^{\widetilde W}}}
          {\Tr \mathrm{e}^{\widetilde W}}
  \le{2\norm{E_J-\mathrm{e}^{\widetilde W}}_{\op}}\le2^{-7}\delta_t,
  \label{eq:density-precision}
\end{equation}
where the first inequality adds and subtracts $E_J/\Tr \mathrm{e}^{\widetilde W}$ and uses
$\norm{E_J}_1=\Tr E_J$.
In the second inequality, the factor $d_k$ from bounding the trace norm by the
operator norm cancels against $\Tr\mathrm{e}^{\widetilde W}\ge d_k$.\looseness=-1

For each pair $i<j$ and each active $k$, \cref{lem:comparison-computation} yields a
symmetric rational $\widehat B_k(x_i,x_j)$ with
$\norm{\widehat B_k(x_i,x_j)-B_k(x_i,x_j)}_{\op}\le2^{-7}\delta_t$; set
$\widehat B_k(x_j,x_i)\coloneqq-\widehat B_k(x_i,x_j)$ and
$\widehat B_k(x_i,x_i)\coloneqq0$.  The learner computes the rational approximations to $C_{ij}$
\[
 \widehat C_{ij}\coloneqq\sum_{k=0}^{K_t}2^{-k}
 \Tr(\widehat\varrho_{t,k}\widehat B_k(x_i,x_j))
\]
exactly for $i<j$, then sets $\widehat C_{ji}\coloneqq-\widehat C_{ij}$ for $i<j$
and $\widehat C_{ii}\coloneqq0$ for every $i$, obtaining a rational skew-symmetric
matrix $\widehat C$.  Since
$\Tr(\widehat\varrho\widehat B)-\Tr(\varrho B)
=\Tr((\widehat\varrho-\varrho)B)+\Tr(\widehat\varrho(\widehat B-B))$ and
$\widehat\varrho_{t,k}$ is positive semidefinite with trace one,
\eqref{eq:density-precision} gives
\begin{equation}
  \max_{i,j}\abs{\widehat C_{ij}-C_{ij}}
  \le2(2^{-7}\delta_t+2^{-7}\delta_t)=2^{-5}\delta_t.
  \label{eq:game-coefficient-precision}
\end{equation}

\subsection{Published Law and Stored Update}
\label{app:rational-updates}

To obtain published probabilities satisfying
$\max_j(C^\top p)_j\le\delta_t$, the learner first finds
$p^{\mathrm{LP}}\in\Delta_n$ satisfying $\widehat C^\top p^{\mathrm{LP}}\le0$
by applying the rational linear-programming algorithm of \cref{app:rational-actions};
the system is feasible by
\cref{lem:balance}, since $\widehat C$ is skew-symmetric.

We round $p^{\mathrm{LP}}$ to probabilities with a common power-of-two denominator
so that exact sampling uses a fixed finite number of fair bits on this round. Let $N_t$ be the least power of two with
$N_t\ge2^6n/\delta_t$.  Rounding each $N_tp_i^{\mathrm{LP}}$ down leaves an integer deficit
$r_t\coloneqq N_t-\sum_{i=1}^n\floor{N_tp_i^{\mathrm{LP}}}$, with $0\le r_t<n$ since
$\sum_iN_tp_i^{\mathrm{LP}}=N_t$.  Add one to each of the first $r_t$ rounded coordinates and
divide by $N_t$, defining
\begin{equation}
 p_i\coloneqq
 \begin{cases}
  (\floor{N_tp_i^{\mathrm{LP}}}+1)/N_t,&1\le i\le r_t,\\
  \floor{N_tp_i^{\mathrm{LP}}}/N_t,&r_t<i\le n.
 \end{cases}
 \label{eq:rounded-probabilities}
\end{equation}
The resulting coordinates are nonnegative and sum to one.  Rounding down removes
a total mass $r_t/N_t$, and the additions restore the same mass, so the triangle
inequality gives
\begin{equation}
 \norm{p-p^{\mathrm{LP}}}_1\le\frac{2r_t}{N_t}\le\frac{2n}{N_t}\le2^{-5}\delta_t.
 \label{eq:probability-rounding-error}
\end{equation}
Using \eqref{eq:probability-rounding-error}, $\abs{C_{ij}}\le2$, and
\eqref{eq:game-coefficient-precision}, we obtain
\begin{equation}
  \max_j\sum_i p_iC_{ij}
  \le2^{-5}\delta_t+2\norm{p-{p^{\mathrm{LP}}}}_1<\delta_t,
  \label{eq:rounded-game-balance}
\end{equation}
which is the first inequality in \eqref{eq:implementation-errors} with
$\xi_t=\delta_t$.  The learner publishes this $p$, draws $A_t$ by partitioning
$\set{0,\ldots,N_t-1}$ into intervals of lengths $N_tp_i$ and reading $\log_2N_t$
independent fair bits, and uses the same $p$ in the update.

For the stored update, we approximate the increment of each score matrix within
$\delta_t$ in operator norm, as required by \eqref{eq:implementation-errors}.
We replace the comparison matrices in \eqref{eq:compact-update} by their rational
approximations:
\begin{equation}
 W_{t+1,k}\coloneqq W_{t,k}+\sum_{i,j}p_i\gamma_j
 \prn*{\eta\widehat B_k(x_i,x_j)-\eta^2\widehat B_k(x_i,x_j)^2},
 \label{eq:stored-rational-update}
\end{equation}
evaluated in exact rational arithmetic, so that the stored matrix is rational and
symmetric.  Fix an active scale $k$ and a pair $(x_i,x_j)$, and suppress
the action arguments in $B_k$ and $\widehat B_k$.
With $\epsilon\coloneqq2^{-7}\delta_t$, the bounds $\norm{B_k}_{\op}\le1$
and $\norm{\widehat B_k-B_k}_{\op}\le\epsilon$ give
\begin{equation}
  \norm{\widehat B_k^2-B_k^2}_{\op}\le(2+\epsilon)\epsilon,
  \qquad
  \norm*{\eta(\widehat B_k-B_k)
        -\eta^2(\widehat B_k^2-B_k^2)}_{\op}
       \le2\eta\epsilon<\frac{\delta_t}2,
  \label{eq:update-computation-error}
\end{equation}
using
$\widehat B_k^2-B_k^2=(\widehat B_k-B_k)\widehat B_k+B_k(\widehat B_k-B_k)$,
$\norm{\widehat B_k}_{\op}\le1+\epsilon$, $\eta=2^{-9}$, and $\epsilon\le1$.
Since $\sum_{i,j}p_i\gamma_j=1$, the stored update
\eqref{eq:stored-rational-update} differs from the exact update
\eqref{eq:compact-update} by less than $\delta_t/2$ in operator norm, which is the
second inequality in \eqref{eq:implementation-errors} with $\zeta_t=\delta_t$.
This verifies the error bound assumed inductively in
\cref{app:computed-densities}.

\subsection{Termination and Regret Bound}
\label{app:termination}

We combine the error bounds from
\crefrange{app:rational-actions}{app:rational-updates} to prove the regret guarantee
in \cref{thm:computation}, and then verify that every round terminates using finitely
many fair random bits.

\computationtheorem*
\begin{proof}
The convex combination $\bar y_t$ satisfies $\norm{Y_t-\bar y_t}_2\le\delta_t$ by
\cref{app:rational-actions}, the probabilities satisfy
\eqref{eq:rounded-game-balance}, and the stored update satisfies
\eqref{eq:update-computation-error}, so \cref{lem:perturbation} applies with
$\xi_t=\zeta_t=\delta_t=(t+1)^{-2}$.  Its bound and \eqref{eq:mean-regret} give
$\E\brc{\Reg_T}\le2^{15}\cdot36\sqrt d+2^{18}\le2^{21}\sqrt d$.

It remains to verify computability and termination.  The integer $K_t$ is the least
$K\ge0$ with $2^K\ge(t+1)^2$ and is found by successive doubling.  The grid
\eqref{eq:rational-direction-grid} is finite, and every oracle call returns a rational
action.  The approximation of each matrix $B_k(x_i,x_j)$ terminates
by \cref{lem:comparison-computation};
the Taylor degree $J$ is found by integer comparisons with powers of two, and $E_J$ and
the entries $\widehat C_{ij}$ are evaluated by finite rational arithmetic. Both rational linear
programs are feasible and are solved by a terminating deterministic algorithm.
Probability rounding uses finitely many rational operations, and sampling uses exactly
$\log_2N_t$ fair bits.

Starting from zero score matrices, \eqref{eq:stored-rational-update}
produces rational symmetric score matrices in finitely many operations, and its error bound
preserves the norm bound of \cref{app:computed-densities} inductively, so the
construction applies in every round.  The procedure uses neither the utility nor the
horizon, and it queries the round-$t$ oracle only during round $t$.
\end{proof}
\end{document}